\theoremstyle{plain}
\newtheorem{theorem}{Theorem}
\newtheorem{proposition}[theorem]{Proposition}
\newtheorem{lemma}[theorem]{Lemma}
\theoremstyle{definition}
\newtheorem{assumption}[theorem]{Assumption}
\newtheorem{remark}[theorem]{Remark}
\newtheorem{example}{Example}
\theoremstyle{remark}
\title{Generation of Geodesics with Actor-Critic Reinforcement Learning to Predict Midpoints}
\author {
    Kazumi Kasaura \\
    OMRON SINIC X Corporation \\
    \texttt{kazumi.kasaura@sinicx.com}
}
\date{}
\DeclareMathOperator*{\argmin}{arg\,min}
\begin{document}

\maketitle

\begin{abstract}
To find the shortest paths for all pairs on manifolds with infinitesimally defined metrics, we introduce a framework to generate them by predicting midpoints recursively. To learn midpoint prediction, we propose an actor-critic approach. We prove the soundness of our approach and show experimentally that the proposed method outperforms existing methods on several planning tasks, including path planning for agents with complex kinematics and motion planning for multi-degree-of-freedom robot arms.
\end{abstract}

\keywords{
Path Planning, Finsler Manifold, Riemannian Manifold, All-Pairs Shortest Paths, Sub-Goal
}

\section{Introduction}
\begin{figure}[ht]
    \centering
    \includegraphics[width=0.8\columnwidth]{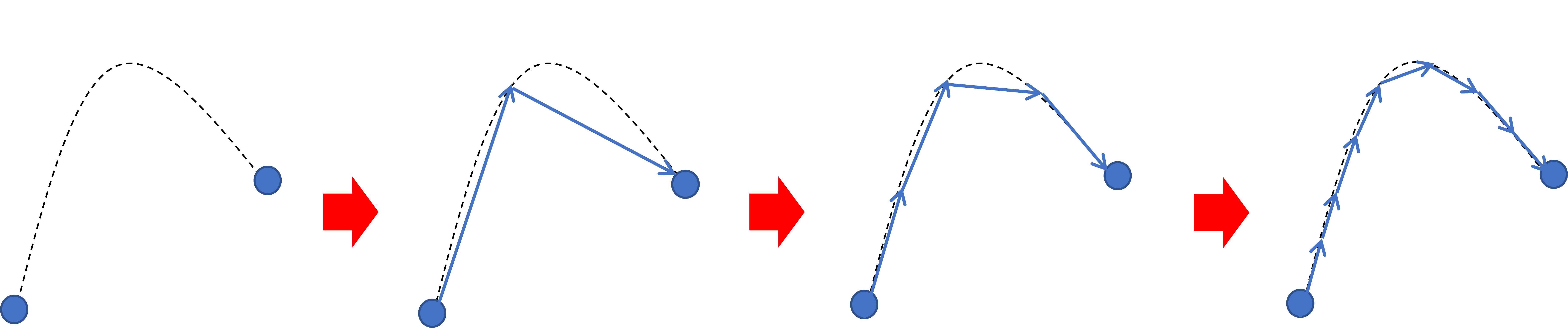}
    \caption{Midpoint tree generation of a geodesic (dotted curve).}
    \label{fig:teaser}
\end{figure}
On manifolds with metrics, minimizing geodesics, or shortest paths, are minimum-length curves connecting points.
Various real-world tasks can be reduced to the generation of geodesics on manifolds. 
Examples include time-optimal path planning on sloping ground~\citep{matsumoto1989slope}, robot motion planning under various constraints~\citep{lavalle2006planning,ratliff2015understanding}, physical systems~\citep{pfeifer2019finsler}, the Wasserstein distance~\citep{agueh2012finsler}, and image morphing~\citep{michelis2021linear,effland2021image}. Typically, metrics are only known infinitesimally (a form of a Riemannian or Finsler metric), and their distance functions are not known beforehand.
Computation of geodesics by solving optimization problems or  differential equations is generally computationally costly and requires an explicit form of the metric, or at least, values of its differentials.
To find geodesics for various pairs of endpoints in a fixed manifold, it is more efficient to use a policy that has learned to generate geodesics for arbitrary pairs.
In addition, we aim to learn only from infinitesimal values of the metric.

\begin{table}[ht]
    \centering
    \begin{tabular}{|r|ccc|}
    \hline
         & GCRL & SGT & \textbf{MT (Ours)}  \\
    \hline
         Prior Metric Knowledge& global approximation & global upper bound & local values\\
         Horizon & linear & logarithmic & logarithmic \\
         Step Length & fixed & variable & variable \\
    \hline
    \end{tabular}
    \caption{Comparison of Planning Frameworks}
    \label{tab:comparsion frameworks}
\end{table}

Goal-conditioned reinforcement learning (GCRL)~\citep{schaul2015universal} to generate waypoints of geodesics sequentially from start to goal has two issues. First, since it
suffers from sparseness of the reward if an agent gets a reward only when it reaches a goal, it is necessary to give the agent appropriate rewards when it gets near its goal. However, to define reward values, we must have a global approximation of the distance function between two points beforehand.
When manifolds have complex metrics, it may be difficult to find approximations.
Second, in trying to generate numerous waypoints, the long horizon makes learning difficult~\citep{wang2020long}.

Instead of generating waypoints in sequence, in the sub-goal tree (SGT) framework~\citep{jurgenson2020sub}, paths are generated by recursively applying a policy that have learned to predict intermediate points.
Since the recursion depth is the logarithm of the number of waypoints, the issue of long horizons is overcome.
This framework also has the advantage that parallelization allows for fast generation.
In addition, the density of waypoints can be varied by changing the depth of recurrence, whereas, in GCRL, the length of a single step is fixed in advance.
On the other hand, the issue of prior knowledge of the metric remains.
In this framework, the length of the "segment" between any two points must be given, which is an upper bound for the distance. However, it is not always easy to calculate them.
For example, in motion planning for a robotic arm, the segment between two states is not clearly defined, and verifying the validity of the corresponding motion is computationally expensive.

To overcome this difficulty, we propose a framework called \textit{midpoint tree} (MT), which is a modification of the sub-goal tree framework.
In this framework, a policy learns to predict midpoints of given pairs of points instead of arbitrary intermediate points, and paths are generated by inserting predicted midpoints recursively, as illustrated in \cref{fig:teaser}. Since adjacent pairs in the generated waypoints are close, the path length can be calculated even if the metric is known only locally, and the policy can be trained to generate shorter paths.

Table~\ref{tab:comparsion frameworks} summarizes the characteristics of the frameworks discussed in this introduction.

In addition to the aforementioned issue,
the original learning method for sub-goal prediction via a policy gradient in \citet{jurgenson2020sub} has poor sample efficiency when recursion is deep. To improve on this, we propose an actor-critic learning approach for midpoint prediction, which is similar to the actor-critic method~\citep{konda1999actor} for conventional reinforcement learning.

We prove theoretically that, under mild assumptions, if the training by our approach converges in the limit of infinite recursion depth, the resulting policy can generate true geodesics.
This result does not hold for generation by arbitrary intermediate points.

We experimentally compared our proposed method, on five path (or motion) planning tasks, to sequential generation with goal-conditioned reinforcement learning and midpoint tree generation trained by a policy gradient method without a critic. Two tasks involved continuous metrics or constraints (local planning), while the other three involved collision avoidance (global planning). In both local and global planning, our proposed method outperformed baseline methods for the difficult tasks.

\section{Related Works}
\subsection{Path Planning with Reinforcement Learning}
One of the most popular approaches for path planning via reinforcement learning is to use a Q-table~\citep{haghzad2017path,low2022modified}. However, that approach depends on the finiteness of the state spaces, and the computational costs grow with the sizes of those spaces.

Several studies have been conducted on path planning in continuous spaces via deep reinforcement learning~\citep{zhou2019learn,wang2019autonomous,kulathunga2022reinforcement,qi2022uav}.
In those works, the methods depend on custom rewards.

\subsection{Goal-Conditioned Reinforcement Learning and Sub-Goals}
Goal-conditioned reinforcement learning~\citep{kaelbling1993learning,schaul2015universal} trains a universal policy for various goals. It learns a value function whose inputs are both the current and goal states.
\citet{kaelbling1993learning} and \citet{dhiman2018floyd} pointed out that goal-conditioned value functions are related to the Floyd-Warshall algorithm for the all-pairs shortest-path problem~\citep{floyd1962algorithm}, as this function can be updated by finding intermediate states. They proposed methods that use brute force to search for intermediate states, which depend on the finiteness of the state spaces.
The idea of using sub-goals for reinforcement learning as options was suggested by \citet{sutton1999between}, and \citet{jurgenson2020sub} linked this notion to the aforementioned intermediate states.
\citet{wang2023optimal} drew attention to the quasi-metric structure of goal-conditioned value functions and suggested using quasi-metric learning~\citep{wang2022learning} to learn those functions.

The idea of generating paths by predicting sub-goals recursively has been proposed in three papers with different problem settings and methods. The problem setting for goal-conditioned hierarchical predictors~\citep{pertsch2020long} differs from ours because they use an approximate distance function learned from given training data, whereas no training data are given in our setting. Divide-and-conquer Monte Carlo tree search~\citep{parascandolo2020divide} is similar to our learning method because it trains both the policy prior and the approximate value function, which respectively correspond to the actor and critic in our method. However, their algorithm depends on the finiteness of the state spaces.

The problem setting for sub-goal tree framework~\citep{jurgenson2020sub} is the most similar to ours, but it remains different, as mentioned in the introduction.
In their main experiment on robot arm planning, local motion is executed by a controller, while only the global path is generated by a policy.
Consequently, the cost of the segment between two states is determined by the controller, and the recursion is not as deep as in our setting.

\section{Preliminaries and Notation}
In \S~\ref{subsec:quasi-metric space}, we describe general notions for quasi-metric spaces that are necessary to formulate our framework.
In \S~\ref{subsec:Finsler}, we describe Finsler manifolds, which serve as important examples of quasi-metric spaces and include formulations of various planning tasks.
On the other hand, formalizing our framework in a general form enables it to handle cases with hard collision avoidance constraints~(\S~\ref{subsec:obstacles}) that cannot be treated as Finsler manifolds.

\subsection{Quasi-Metric Space}\label{subsec:quasi-metric space}
We follow the notation in \citet{kim1968pseudo}.
Let $X$ be a space. A \textit{pseudo-quasi-metric} on $X$ is a function $d:X\times X\rightarrow \mathbb{R}$ such that
    $d(x,x)=0$, $d(x,y) \geq 0$, and
    $d(x,z)\leq d(x,y)+d(y,z)$
for any $x,y,z\in X$.
For $x \in X$ and $r>0$, the open ball $\{x'\in X|d(x,x')<r\}$ is denoted by $B_{r}(x)$.
A topology on $X$ is induced by $d$, which has the collection of all open balls as a base.
A pseudo-quasi-metric $d$ is called a \textit{quasi-metric}
if $d(x,y)>0$ for any $x,y\in X$ with $x\neq y$.

A pseudo-quasi-metric $d$ is called \textit{weakly symmetric} if $d(y_i,x) \to 0$
for any $x\in X$ and sequence $y_0, y_1, \ldots\in X$ with $d(x,y_i) \to 0$~\citep{arutyunov2017topological}.
When $d$ is weakly symmetric, $d$ is continuous as a function with respect to the topology it induces.

For two points $x,y \in X$, a point $z\in X$ is called a \textit{midpoint} between $x$ and $y$ if $d(x,z)=d(z,y)=d(x,y)/2$.
The space $(X,d)$ is said to have the \textit{midpoint property} if at least one midpoint exists for every pair of points.
The space $(X,d)$ is said to have the \textit{continuous midpoint property} if there exists a continuous map $m:X\times X\rightarrow X$ such that $m(x,y)$ is a midpoint between $x$ and $y$ for any $x,y\in X$~\citep{horvath2009note}.

Let $(X,d_X)$ and $(Y,d_Y)$ be pseudo-quasi-metric spaces. A pseudo-quasi-metric $d_{X\times Y}$ can be defined on $X\times Y$ by
\begin{equation}
    d_{X\times Y}((x_1,x_2),(y_1,y_2)):=d_X(x_1,y_1)+d_Y(x_2,y_2)
\end{equation}
and the induced topology coincides with the topology as a direct product. 

A function $f:(X, d_X) \to (Y, d_Y)$ is called \textit{uniformly continuous} if the following holds: For any $\varepsilon >0$, there exists $\delta>0$ such that, for any $x_1, x_2 \in X$ with $d_X(x_1, x_2) < \delta$, we have $d_Y(f(x_1), f(y_2)) < \varepsilon$.

A series $(f_0, f_1, \ldots)$ of functions from $(X, d_X)$ to $\mathbb{R}$ is called \textit{eventually equicontinuous} if the following holds~\citep{yang1969net}: For any $x\in X$ and $\varepsilon>0$, there exists $I \in \mathbb{N}$ and $\delta>0$ such that, for any $i \geq I$ and $x'\in B_{\delta}(x)$,
we have $\left|f_i(x)-f_i(x')\right| < \varepsilon$.
Note that each $f_i$ is not necessarily continuous.

\subsection{Finsler Geometry}\label{subsec:Finsler}
An important family of quasi-metric spaces is Finsler manifolds, including Riemannian manifolds as special cases.
Just like Riemannian manifolds, Finsler manifolds are also equipped with an infinitesimal metric, from which the distance is derived.
While Riemannian manifolds have symmetric metrics, Finsler manifolds allow for asymmetric ones, enabling the formulation of planning tasks with directional costs.

A \textit{Finsler manifold} is a differential manifold $M$ equipped with a function $F:TM\rightarrow [0,\infty)$, where $TM=\bigcup_{x\in M} T_xM$ is the tangent bundle of $M$, and $F$ satisfies the following conditions~\citep{bao2000introduction}.
\begin{enumerate}
\item $F$ is smooth on $TM\setminus 0$.
\item $F(x, \lambda v) = \lambda F(x, v)$ for all $\lambda>0$ and $(x,v) \in TM$.
\item The Hessian matrix of $F(x,-)^2$ is positive definite at every point of $TM_x\setminus 0$ for all $x\in M$.
\end{enumerate}

Let $\gamma:[0,1]\rightarrow M$ be a piecewise smooth curve on $M$. We define the length of $\gamma$ as
\begin{equation}
L(\gamma) := \int_0^1F\left(\gamma(t),\frac{d\gamma}{dt}(t)\right)dt.
\end{equation}

For two points $x,y\in M$, we define the distance $d(x,y)$ as
\begin{equation}
  d(x,y):=\inf\left\{L(\gamma)\middle|\gamma:[0,1]\rightarrow M,\gamma(0)=x, \gamma(1)=y \right\}.  
\end{equation}
Then, $d$ is a weakly symmetric quasi-metric~\citep{bao2000introduction}.
A curve $\gamma:[0,1]\rightarrow M$ is called a \textit{minimizing geodesic} if $L(\gamma)=d(\gamma(0),\gamma(1))$.

Clearly, $(M,d)$ has the midpoint property. It is known~\citep{bao2000introduction,amici2010convex} that, for any point of $M$, there exists a neighborhood $U\subseteq M$ such that any two points $p,q\in U$ can be uniquely connected by a minimizing geodesic inside $U$.
Note that $(U,d)$ has the continuous midpoint property.
That is, Finsler manifolds always have the continuous midpoint property locally (but not always globally).
See also Remark~\ref{rem:continuous_midpoint_property}.

\begin{example}\label{exam:matsumoto}
The \textit{Matsumoto metric} is an asymmetric Finsler metric that considers times to move on inclined planes~\citep{matsumoto1989slope}. Let $M \subseteq \mathbb{R}^2$ be a region on the plane with the standard coordinates $x,y$, and let $h:M\rightarrow \mathbb{R}$ be a differentiable function that indicates heights of the field. The Matsumoto metric $F:TM\rightarrow [0,\infty)$ is then defined as follows:
\begin{equation}
    F(p, (v_x,v_y)):=\frac{\alpha^2}{\alpha-\beta}
\end{equation}
where,
\begin{equation}
    \beta := v_x\frac{\partial h}{\partial x}(p)+v_y\frac{\partial h}{\partial y}(p),\,\alpha := \sqrt{v_x^2+v_y^2+\beta^2}. 
\end{equation}
This metric takes larger values on uphill slopes and smaller values on downhill slopes.
\end{example}

\begin{example}
For unidirectional car-like agents, the cost of trajectories considering kinematics as in \citet{rosmann2017kinodynamic} can be formulated as a Finsler metric. See \S~\ref{subsubsec:carlike} for the details.
\end{example}

\section{Theoretical Results}
We briefly explain our main idea to generate the shortest path between any two points in \S~\ref{subsec:midpoint tree}.
We prove propositions justifying our idea in \S~\ref{subsec:functional equation} and \S~\ref{subsec:iterative}. This result relies on the assumption that a function matching the true distance in the infinitesimal limit is known.
We present a construction of this assumed function for Finsler manifolds in \S~\ref{subsec:Finsler Case}.
In \S~\ref{subsec:obstacles}, we also show that we can handle the case where paths to be generated are restricted to a subset of a Finsler manifold, which enable our approach to be applied to tasks with hard collision avoidance constraints.

\subsection{Midpoint Tree}\label{subsec:midpoint tree}
We propose the \textit{midpoint tree} framework to solve the all-pairs shortest path problem in a given pseudo-quasi-metric space.
In this framework, a policy learns to predict a midpoint between any two points.
The shortest path between any start and goal points can be generated by recursively applying this prediction to adjacent pairs of previously generated waypoints, as illustrated in Figure~\ref{fig:teaser}.
We train the policy (actor) to predict midpoints by an actor-critic approach. In other words, we simultaneously train a function (critic) to predict distances.

In the following two subsections, we describe the theoretical considerations of this actor-critic learning approach.
In \S~\ref{subsec:functional equation}, we prove that, if certain functional equations hold and the critic coincide with the true distance in the infinitesimal limit, then the actor and the critic coincide with the true midpoint and distance, respectively.
This result is important as it justifies restricting actor predictions to midpoints~(Remark~\ref{rem:afterprop2}).
In \S~\ref{subsec:iterative}, we prove that these conditions hold if the actor and critic converge through iterative improvement.

\subsection{Functional Equation}\label{subsec:functional equation}

Let $(X, d)$ be a pseudo-quasi-metric space. When we do not know $d$ globally, we want to train a function (actor) $\pi : X \times X\rightarrow X$ to predict midpoints. 
We also train a function (critic) $V : X \times X\rightarrow \mathbb{R}$ to predict distances.

If $\pi$ and $V$ coincide with the true midpoints and distances, respectively, the following functional equations hold:
\begin{equation}\label{eq:Vpi}
    V(x,y)=V(x,\pi(x,y))+V(\pi(x,y),y),
\end{equation}
\begin{equation}\label{eq:piargmin}
    \pi(x,y)\in\argmin_z\left( V(x,z)^2+V(z,y)^2\right),
\end{equation}
\begin{equation}\label{eq:dpi}
    d(x,\pi(x,x))= d(\pi(x,x),x)=0.
\end{equation}
Note that $d(x,z)^2+d(z,y)^2$ takes its minimum value when $z$ is a midpoint between $x$ and $y$, because
\begin{equation}\label{eq:reason_midpoint}
d(x,z)^2+d(z,y)^2
= \frac{1}{2}(d(x,z)+d(z,y))^2+\frac{1}{2}(d(x,z)-d(z,y))^2.
\end{equation}
The first term takes the minimum value $d(x,y)^2/2$ when $z$ lies on some shortest path from $x$ to $y$ and the second term takes the minimum value $0$ when $z$ is equidistant from $x$ and $y$.

\begin{remark}\label{rem:reason_internal}
The above argument is generalized as follows: A point dividing $x$ and $y$ internally in the ratio $\alpha:1$ minimizes $d(x,-)^2 + \alpha d(-,y)^2$, because
\begin{equation}
d(x,z)^2+\alpha d(z,y)^2
= \frac{\alpha}{1+\alpha}(d(x,z)+d(z,y))^2+\frac{1}{1+\alpha}(d(x,z)-\alpha d(z,y))^2.
\end{equation}
\end{remark}

The following proposition states that, under mild assumptions, the conjunction of these functional equations and the condition that $V$ and $d$ are equal in the infinitesimal limit constitutes a sufficient (and obviously necessary) condition for $\pi$ and $V$ to coincide with the midpoints and distances, respectively.

\begin{proposition}\label{prop:secondlenma}
Assume that $(X,d)$ has the midpoint property and $\pi$ and $V$ satisfy (\ref{eq:Vpi}), (\ref{eq:piargmin}), and (\ref{eq:dpi}). Assume also that $\pi$ is uniformly continuous.

Let $\varepsilon \in (0, 1/9)$.
If there exists $\delta>0$ such that, $V$ approximates $d$ locally for pairs with distances less than $\delta$, which means that
\begin{equation}\label{eq:unifV}
     (1-\varepsilon)d(x,x')\leq V(x,x')\leq (1+\varepsilon)d(x,x') \text{ for any }x,x' \in X\text{ with }d(x,x')<\delta,
\end{equation}
then $V$ approximates $d$ globally, which means that
\begin{equation}
     (1-\varepsilon)d(x,y)\leq V(x,y)\leq (1+\varepsilon)d(x,y) \text{ for any }x,y \in X.
\end{equation}

In particular, if such $\delta>0$ exists for any $\varepsilon>0$, then $V=d$ and $\pi(x,y)$ is a midpoint between $x$ and $y$ for any $x,y\in X$.
\end{proposition}
\begin{proof}
Before proof, we explicitly rewrite the assumption of uniform continuity of $\pi$: For any $\alpha>0$, there exists $\beta>0$ such that, for any $x,y,z\in X$ with $d(y,z)<\beta$, we have
\begin{equation}\label{eq:unifpi}
     \max\{d(\pi(x,y),\pi(x,z)),d(\pi(y,x),\pi(z,x))\}<\alpha.
\end{equation}

We assume that $\delta>0$ satisfies the condition for a sufficiently small $\varepsilon>0$.

We first prove that $|V(x,y)|\leq (1+\varepsilon)d(x,y)$ for any $x,y\in X$.
We prove that $|V(x,y)|\leq (1+\varepsilon)d(x,y)$ when $d(x,y)<2^n\delta$ by induction for $n\in \mathbb{N}$.

For the case $n=0$, this is a direct consequence of (\ref{eq:unifV}) since $-(1+\varepsilon)d(x,y) \leq (1-\varepsilon)d(x,y)$.

We assume the induction hypothesis is true for $n$ and take $x,y\in X$ such that $d(x,y)<2^{n+1}\delta$. Let $m$ be the midpoint between $x$ and $y$.
Then,
\begin{equation}\label{eq:vdcalc2}
    \begin{split}
        V(x,y)^2
        &= \left(V(x,\pi(x,y))+V(\pi(x,y),y) \right)^2\\
        &\leq 2V(x,\pi(x,y))^2+2V(\pi(x,y),y)^2\\
        &\leq 2V(x,m)^2+2V(m,y)^2\\
        &\leq 2(1+\varepsilon)^2\left(d(x,m)^2+d(m,y)^2\right)\\
        &=(1+\varepsilon)^2d(x,y)^2
    \end{split}
\end{equation}
where the first equality comes from (\ref{eq:Vpi}),
the first inequality comes from $(a+b)^2\leq 2a^2+2b^2$,
the second inequality comes from (\ref{eq:piargmin}), and the third inequality comes from the induction hypothesis and $d(x,m)=d(m,y)<2^n\delta$.
Thus, $|V(x,y)|\leq (1+\varepsilon)d(x,y)$.

By induction, for all $x,y\in X$,
\begin{equation}\label{eq:above ineq}
|V(x,y)|\leq (1+\varepsilon)d(x,y).
\end{equation}

Next, we prove that $(1-\varepsilon)d(x,y)\leq V(x,y)$ for any $x,y\in X$ with the following strategy: To prove $(1-\varepsilon)d(x,y)\leq V(x,y)$, it is enough to show that this inequality holds for the pairs $(x,\pi(x,y))$ and $(\pi(x,y),y)$.
Thus, we use induction and apply the induction hypothesis to these pairs.
However, it is not straightforward to bound $d(x,\pi(x,y))$ and $d(\pi(x,y),y)$.
On the other hand, conversely, if the values of $V$ are close to those of $d$, $\pi$ is close to the midpoint.
Using this and the uniform continuity of $\pi$, we gradually extend the range in which the inequality holds, starting from the case where $\pi(x,y)$ is sufficiently close to both $x$ and $y$.

From the assumption of uniform continuity of $\pi$, we can take $\delta'>0$ such that, for any $x,y,z\in X$ with $d(y,z)< \delta'$,
\begin{equation}
     \max\{d(\pi(x,y),\pi(x,z)),d(\pi(y,x),\pi(z,x))\}<\delta.
\end{equation}
Especially, by considering the cases where $(x,y,z)$ is $(x,x,y)$ or $(y,x,y)$, when $d(x,y) < \delta'$,
\begin{equation}
     \max\{d(\pi(x,x),\pi(x,y)),d(\pi(x,y),\pi(y,y))\}<\delta.
\end{equation}
From (\ref{eq:dpi}) and the triangular inequality,
\begin{equation}
     \max\{d(x,\pi(x,y)),d(\pi(x,y),y)\}<\delta.
\end{equation}

Let $c_{\varepsilon}:=\left(1+2\sqrt{\varepsilon}+\varepsilon\right)/2$, which appears later in (\ref{eq:a_cl}). Since $\varepsilon < 1/9$, we have $(1-\varepsilon)^{-1}c_{\varepsilon}<1$. 
Using the uniform continuity of $\pi$ again, we can take $\eta'>0$ such that, for any $x,y,z\in X$ with $d(y,z)<\eta'$,
\begin{equation}
    \max\left\{d(\pi(x,y),\pi(x,z)), d(\pi(y,x),\pi(z,x))\right\}<\left(1-\frac{c_{\varepsilon}}{1-\varepsilon}\right)\delta'.
\end{equation}
Let $\eta := \min\{\eta'/2, \delta'\}$.

We prove by induction for $n\in\mathbb{N}$ that, when $d(x, y)<\delta' +n\eta$,
\begin{equation}\label{eq:IH21}
    (1-\varepsilon) d(x, y) \leq V(x, y),
\end{equation}
\begin{equation}\label{eq:IH22}
    (1-\varepsilon) d(x, \pi(x, y))\leq V(x, \pi(x, y)),
\end{equation}
\begin{equation}\label{eq:IH23}
    (1-\varepsilon) d(\pi(x, y),y)\leq V(\pi(x, y),y).
\end{equation}
Note that (\ref{eq:IH21}) follows from (\ref{eq:IH22}) and (\ref{eq:IH23}) because
\begin{equation}
\begin{split}
    (1-\varepsilon)d(x,y)&\leq (1-\varepsilon)(d(x,\pi(x,y))+d(\pi(x,y),y))\\
    &\leq V(x,\pi(x,y))+V(\pi(x,y),y)\\
    &=V(x,y).
\end{split}
\end{equation}

For the case $n=0$, if $d(x,y)<\delta'$, by the condition of $\delta'$, $d(x,\pi(x,y))$ and $d(\pi(x,y),y)$ are smaller than $\delta$. Thus, by the condition of $\delta$, (\ref{eq:IH22}) and (\ref{eq:IH23}) hold.

\begin{figure}[ht]
    \centering
    \includegraphics[width = 0.4\columnwidth]{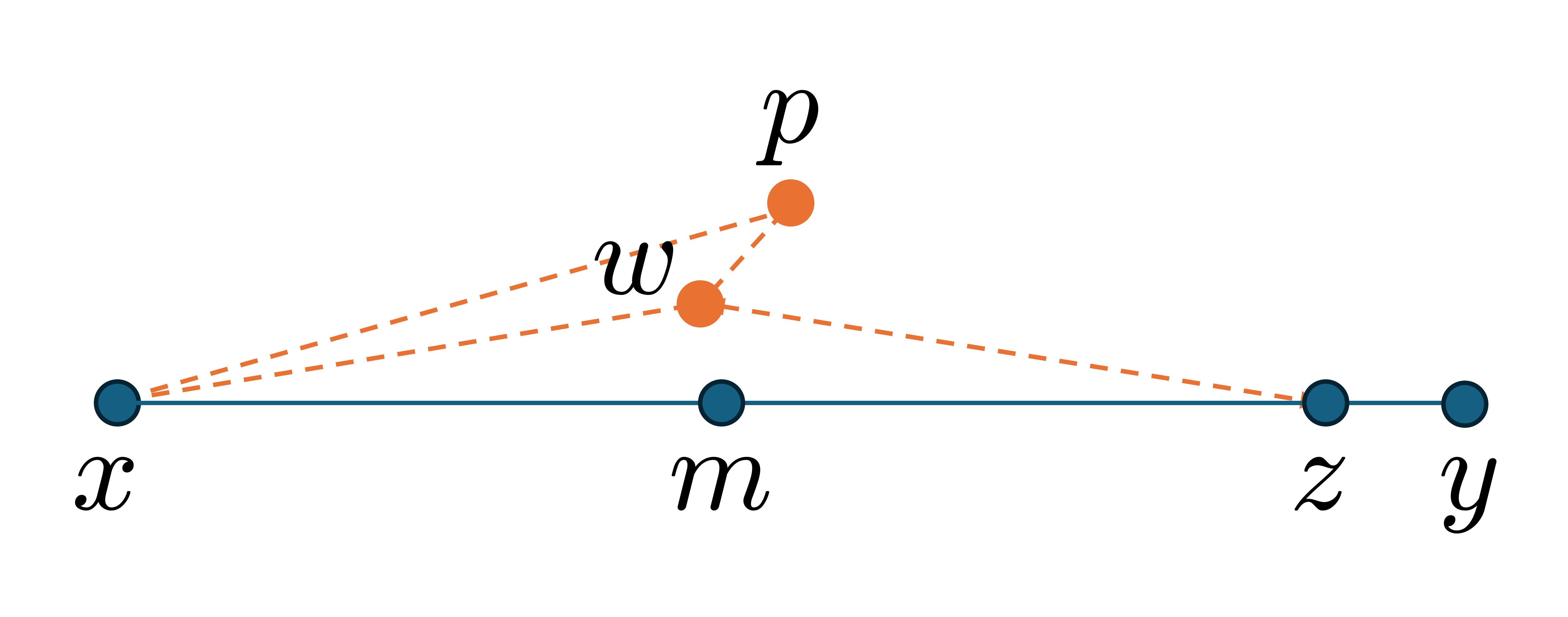}
    \caption{Positions of $x,y,z,w,m,p$.}
    \label{fig:proofgraph}
\end{figure}
We assume the induction hypothesis is true for $n$
and take $x,y$ such that $d(x,y) < \delta' +(n+1)\eta$.
If $d(x,y) < \delta' +n\eta$, the conclusions hold by the induction hypothesis. Thus, we can assume that $\delta' +n\eta \leq d(x,y)$.
By taking midpoints recursively, we can take $z\in X$ such that $d(x,z)+d(z,y)=d(x,y)$ and $\eta\leq d(z,y)<2\eta$ as shown in \cref{fig:proofgraph}. Let $w:=\pi(x,z)$, $a:=V(x,w)$, $b:=V(w,z)$, and $l:=d(x,z)$.
By (\ref{eq:Vpi}) and (\ref{eq:above ineq}), $ a+b = V(x,z)\leq (1+\varepsilon)l$.
On the other hand, as $l<\delta' +n\eta$, by (\ref{eq:IH21}) in the induction hypothesis, $a+b=V(x,z)\geq(1-\varepsilon)l$. Let $m$ be a midpoint between $x$ and $z$. Then, by (\ref{eq:piargmin}) and (\ref{eq:above ineq}),
\begin{equation}
a^2+b^2\leq V(x,m)^2+V(m,y)^2\leq (1+\varepsilon)^2\left(d(x,m)^2+d(m,z)^2\right)=\frac{(1+\varepsilon)^2l^2}{2}.
\end{equation}
Thus,
\begin{equation}\label{eq:a_cl}
    \begin{split}
        a&\leq \frac{1}{2}\left(a+b+|a-b|\right)\\
        &= \frac{1}{2}\left(a+b+\sqrt{2(a^2+b^2)-
        (a+b)^2}\right)\\
        &\leq \frac{1}{2}\left((1+\varepsilon)l+\sqrt{(1+\varepsilon)^2l^2-(1-\varepsilon)^2l^2}\right)\\
        &=c_{\varepsilon}l.
    \end{split}
\end{equation}
By (\ref{eq:IH22}) in the induction hypothesis, $d(x,w)\leq (1-\varepsilon)^{-1}a\leq (1-\varepsilon)^{-1}c_{\varepsilon}l$.
Let $p:=\pi(x,y)$. Then,
\begin{equation}
    d(x,p)\leq d(x,w)+d(w,p) < \frac{c_{\varepsilon}}{1-\varepsilon}l+\left(1-\frac{c_{\varepsilon}}{1-\varepsilon}\right)\delta'<\delta'+n\eta,
\end{equation}
where the second inequality comes from $d(z,y)<2\eta$ and the way $\eta$ is given.
Thus, by (\ref{eq:IH21}) in the induction hypothesis, $(1-\varepsilon)d(x,p)\leq V(x,p)$. By a symmetrical argument, we can also prove $(1-\varepsilon)d(p,y)\leq V(p,y)$.

Therefore, $(1-\varepsilon)d(x,y)\leq V(x,y)$ for any $x,y\in X$.

Thus, if the assumption holds for any $\varepsilon>0$, $V=d$. Then, by (\ref{eq:piargmin}) and the midpoint property, $\pi(x,y)$ is a midpoint between $x$ and $y$ for any $x,y\in X$.
\end{proof}

\begin{remark}\label{rem:afterprop2}
The conclusion does not follow if (\ref{eq:piargmin}) is replaced with
\begin{equation}\label{eq:piargmin2}
    \pi(x,y)\in\argmin_z \left(V(x,z)+V(z,y)\right).
\end{equation}
Let $f:[0,\infty)\rightarrow [0,\infty)$ be a non-decreasing subadditive function such that $\lim_{h\rightarrow +0} f(h)/h=1$ (for example, $f(h):=2\sqrt{1+h}-2$) and let $V:=f\circ d$. We consider the case $\pi(x,y)=x$.
Then, for any $x,y,z\in X$,
\begin{equation}
V(x,\pi(x,y))+V(\pi(x,y),y)=V(x,y) \leq f(d(x,z)+d(z,y))\leq V(x,z)+V(z,y).
\end{equation}
Thus, (\ref{eq:piargmin2}) and all the conditions of Proposition~\ref{prop:secondlenma} except (\ref{eq:piargmin}) are satisfied.
However, $V\neq d$ generally.
Note that $V\leq d$ follows even from these conditions,
which supports the following insights. If the upper bounds of distances are given, then the approach to predict arbitrary intermediate points can work as in \citet{jurgenson2020sub}. However, if distances can be approximated only for two close points, then it is necessary to avoid generating points near endpoints.
\end{remark}

\begin{remark}
In Proposition~\ref{prop:secondlenma}, it is necessary to assume that $\pi$ is uniformly continuous. Let $(X,d)$ be a pseudo-quasi-metric space with the midpoint property, and let $m(x,y)$ be the midpoint between $x,y\in X$. We consider the case where
\begin{equation}
V(x,y) = \begin{cases}
d(x,y) & d(x,y)\leq \sqrt{2}, \\
1 & \text{otherwise}
\end{cases}
\end{equation}
and 
\begin{equation}
\pi(x,y) = \begin{cases}
m(x,y) & d(x,y)\leq \sqrt{2}, \\
x & \text{otherwise}.
\end{cases}
\end{equation}
Then, for any $x,y,z \in X$,
\begin{equation}
\begin{split}
V(x,z)^2+V(z,y)^2 &\geq \min \left\{d(x,y)^2/2, 1\right\} \\
&= V(x,\pi(x,y))^2+V(\pi(x,y),y)^2.
\end{split}
\end{equation}
Thus, except for the assumption that $\pi$ is uniformly continuous, the conditions of Proposition~\ref{prop:secondlenma} are satisfied.
\end{remark}

\subsection{Iteration}\label{subsec:iterative}
Although we do not know the metric $d$ globally,
we assume that we know it infinitesimally, that is, we have a continuous function $C:X\times X\rightarrow \mathbb{R}$ that coincides with $d$ in the limit $d(x,y)\to 0$ or $C(x,y)\to 0$.
Formally, we assume the following conditions:
\begin{assumption}\label{assump:C}
\leavevmode
\makeatletter
\@nobreaktrue
\makeatother
\begin{enumerate}
    \item For $x\in X$ and a series $y_0, y_1, \ldots \in X$, if $C(x,y_i)\to 0$, then $d(x,y_i)\to 0$.
    \item For $x\in X$ and $\varepsilon>0$, there exists $\delta>0$ such that, for any $y,z\in B_{\delta}(x)$,
    \begin{equation}
         (1-\varepsilon)d(y,z)\leq C(y,z)\leq (1+\varepsilon)d(y,z).
    \end{equation}
\end{enumerate}
\end{assumption}

Starting with $C$, we consider iterative construction of actors and critics to satisfy (\ref{eq:Vpi}) and (\ref{eq:piargmin}). Formally, we assume series of functions, $V_i:X\times X \to \mathbb{R}$ and $\pi_i:X\times X \to X$, indexed by $i=0,1,\ldots$, that satisfy the following conditions:
\begin{equation}\label{eq:V0}
    V_0(x,y)=C(x,y),
\end{equation}
\begin{equation}\label{eq:defpii}
    \pi_i(x,y)\in\argmin_z \left(V_i(x,z)^2+V_i(z,y)^2\right),
\end{equation}
\begin{equation}\label{eq:Vi}
V_{i+1}(x,y)=V_i(x,\pi_i(x,y))+V_i(\pi_i(x,y),y).
\end{equation}
The following proposition states that, under mild assumptions, the limits of $\pi_0, \pi_1, \ldots$ and $V_0, V_1,\ldots $, if they exist, coincide with our desired outcome.

\begin{proposition}\label{prop:uniqueness}
Assume that $(X,d)$ is a compact, weakly symmetric pseudo-quasi-metric space with the midpoint property, and that series of functions $\pi_i:X\times X\rightarrow X$ and $V_i:X\times X\rightarrow \mathbb{R}$ satisfy (\ref{eq:V0}), (\ref{eq:defpii}), and (\ref{eq:Vi}). Assume also that the series $(V_0, V_1, \ldots)$ is eventually equicontinuous.
For functions $\pi:X\times X\rightarrow X$ and $V:X\times X\rightarrow \mathbb{R}$, we assume that $\pi_i(x,y)\to \pi(x,y)$ and $V_i(x,y)\to V(x,y)$ when $i\to\infty$ for any $x,y\in X$.
Then, if $\pi$ is continuous, $V$ and $\pi$ satisfy all conditions in Proposition~\ref{prop:secondlenma} and thus coincide with the exact distances and midpoints.
\end{proposition}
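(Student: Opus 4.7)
My plan is to verify each hypothesis of Proposition~\ref{prop:secondlenma} for the limit pair $(\pi, V)$ and then invoke it. First, Arzel\`a--Ascoli on the compact $X \times X$ applied to the equicontinuous, pointwise bounded family $(V_i)$ upgrades pointwise to uniform convergence $V_i \to V$. This lets us pass $i \to \infty$ in (\ref{eq:defpii}) (at any fixed test point $z$) and in (\ref{eq:Vi}) (combining uniform convergence, continuity of $V$, and $\pi_i(x,y) \to \pi(x,y)$), producing (\ref{eq:piargmin}) and (\ref{eq:Vpi}) for $(\pi, V)$. Uniform continuity of $\pi$ is automatic on the compact space. A finite subcover of $X$ sharpens both parts of Assumption~\ref{assump:C} into uniform statements: a threshold $\delta_\varepsilon > 0$ for each $\varepsilon > 0$ with $(1-\varepsilon) d \leq C \leq (1+\varepsilon) d$ whenever $d(y,z) < \delta_\varepsilon$, and a threshold $\delta^{(1)}_\eta > 0$ for each $\eta > 0$ with $C(y,z) < \delta^{(1)}_\eta$ forcing $d(y,z) < \eta$.

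The structural observation that unlocks the analysis is that unfolding (\ref{eq:Vi}) recursively writes $V_i(x, y) = \sum_{k=0}^{2^i - 1} C(q_k^i, q_{k+1}^i)$ over a dyadic midpoint path $x = q_0^i, \ldots, q_{2^i}^i = y$. Because $C \geq 0$ (as it approximates a distance), each summand is dominated by the total $V_i(x, y)$; combined with uniform Assumption~\ref{assump:C}(1) this immediately yields the implication ``$V_i(x, y) = 0 \Rightarrow d(x, y) = 0$'', uniformly in $i$. A straightforward induction on $i$ then shows $V_i(x, x) = 0$---the step uses that the minimization in (\ref{eq:defpii}) at $y = x$ takes value $0$ at $z = x$, so $V_i(x, \pi_i(x, x)) = V_i(\pi_i(x, x), x) = 0$---and the above implication yields $d(x, \pi_i(x, x)) = d(\pi_i(x, x), x) = 0$ for every $i$; continuity of $d$ (from weak symmetry) and $\pi_i(x, x) \to \pi(x, x)$ pass these to the limit, giving $V(x, x) = 0$ and (\ref{eq:dpi}). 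The global upper bound $V \leq d$ follows by the doubling induction in the first half of Proposition~\ref{prop:secondlenma}'s proof: from $V_0 = C \leq (1+\varepsilon) d$ on $d < \delta_\varepsilon$, the estimate $V_{i+1}(x,y)^2 \leq 2\bigl(V_i(x, m)^2 + V_i(m, y)^2\bigr) \leq (1+\varepsilon)^2 d(x,y)^2$ at the $d$-midpoint $m$ propagates the bound to $d(x,y) < 2^i \delta_\varepsilon$ and hence, in the limit, to all of $X \times X$; sending $\varepsilon \to 0$ gives $V \leq d$.

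The main obstacle is the matching local lower bound $V(x,y) \geq (1-\varepsilon) d(x,y)$ for every $\varepsilon > 0$, needed to conclude $V = d$ via Proposition~\ref{prop:secondlenma}. I would close this with a second use of the path representation: choose $\delta^*_\varepsilon \leq \delta_\varepsilon$ so small that $(1+\varepsilon)\delta^*_\varepsilon < \delta^{(1)}_{\delta_\varepsilon}$; then for $d(x,y) < \delta^*_\varepsilon$ the upper bound gives $V_i(x,y) \leq (1+\varepsilon) d(x,y) < \delta^{(1)}_{\delta_\varepsilon}$, so every segment satisfies $C(q_k^i, q_{k+1}^i) < \delta^{(1)}_{\delta_\varepsilon}$, which by Assumption~\ref{assump:C}(1) forces each $d(q_k^i, q_{k+1}^i) < \delta_\varepsilon$; Assumption~\ref{assump:C}(2) then gives $C(q_k^i, q_{k+1}^i) \geq (1-\varepsilon) d(q_k^i, q_{k+1}^i)$ per segment, and summing with the triangle inequality yields $V_i(x,y) \geq (1-\varepsilon) \sum_k d(q_k^i, q_{k+1}^i) \geq (1-\varepsilon) d(x,y)$, uniformly in $i$; passing to the limit transfers the bound to $V$. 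The technical crux---where I expect the real care---is orchestrating the threshold cascade $\delta_\varepsilon \leadsto \delta^{(1)}_{\delta_\varepsilon} \leadsto \delta^*_\varepsilon$ so that the upper bound on $V_i$ feeds each segment into the ``$C$ small $\Rightarrow d$ small'' implication and thence back into the two-sided local comparison, crucially using $C \geq 0$ so that each segment's $C$ is dominated by $V_i$ independently of $i$. With all hypotheses of Proposition~\ref{prop:secondlenma} verified, it yields $V = d$ and shows $\pi(x, y)$ is a midpoint between $x$ and $y$ for every pair.
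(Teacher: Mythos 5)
Your plan matches the paper's high-level strategy (verify every hypothesis of Proposition~\ref{prop:secondlenma} for the limit pair and invoke it), and your limit-passing to obtain~(\ref{eq:Vpi}),~(\ref{eq:piargmin}),~(\ref{eq:dpi}) and the uniform continuity of $\pi$ parallels Lemma~\ref{prop:firstlemma}(\ref{enum:1}),~(\ref{enum:2}) and Lemma~\ref{prop:thirdlemma}. Where you genuinely diverge is the local two-sided comparison between $V$ and $d$: the paper derives the lower bound through the inductive Lemma~\ref{prop:firstlemma}(\ref{enum:3}), which propagates ``$V_0(x,y)<\delta\Rightarrow d\leq(1+\varepsilon)V_0$'' up the recursion~(\ref{eq:Vi}) using $V_i(x,\pi_i(x,y))\le V_{i+1}(x,y)$, whereas you unroll~(\ref{eq:Vi}) into the dyadic path-sum $V_i(x,y)=\sum_k C(q_k^i,q_{k+1}^i)$, bound each summand by the total, funnel each segment through ``$C$ small $\Rightarrow d$ small'', and resum. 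That is a valid and arguably more transparent argument, and it makes the mechanism (each $C$-leg of the midpoint tree is individually small) explicit in a way the paper's induction does not. Two places should be tightened: (i) you assert $C\ge 0$ ``as it approximates a distance,'' but Assumption~\ref{assump:C} only controls $C$ near the diagonal; globally $C\ge 0$ must instead be extracted from the sharpened part-(\ref{enum:3}) hypothesis (``$C(x,y)<\delta\Rightarrow d\le(1+\varepsilon)C$'' forces $C\ge 0$), and this is needed before you can claim each summand is dominated by $V_i(x,y)$; and (ii) ``a finite subcover sharpens Assumption~\ref{assump:C}'' compresses real work: the paper introduces the ratio function $r=C/d$, proves its continuity, and invokes Lemma~\ref{prop:thirdlemma} and Lemma~\ref{prop:last lemma}, both of which use weak symmetry in an essential way (and weak symmetry is likewise needed for your ``automatic'' uniform continuity of $\pi$). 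These are the only places I would call genuine gaps; the structure and the novel path-sum lower bound are sound.
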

See \cref{appendix:proof of uniqueness} for the proof.
\begin{example}
We assume that $(X,d)$ has the midpoint property.
We consider the case in Remark~\ref{rem:afterprop2} where $f(h):=2\sqrt{1+h}-2$ and $C=f\circ d$. Then, $\pi_0$ coincide with the true midpoints. Indeed, by the triangle inequality and the monotonicity of $f$,
\begin{equation}
\begin{split}
C(x,z)^2+C(z,y)^2&\leq f\left(d(x,z)\right)^2+f\left(d(x,y)-d(x,z)\right)^2 \\
&= 16+4d(x,y)-8\left(\sqrt{1+d(x,z)}+\sqrt{1+d(x,y)-d(x,z)}\right),
\end{split}
\end{equation}
and the last term takes the maximum value when $d(x,z) = d(x,y)/2$. Thus, $C(x,z)^2+C(z,y)^2$ takes the minimum value when $z$ is the midpoint between $x$ and $y$.
Therefore,
\begin{equation}
V_1(x,y)=2f\left(\frac{d(x,y)}{2}\right)=4\sqrt{1+\frac{d(x,y)}{2}}-4.
\end{equation}
As before, $V_1(x,z)^2+V_1(z,y)^2$ takes the minimum value when $z$ is the midpoint between $x$ and $y$, and $\pi_1$ coincide with the true midpoints. Since this continues,
\begin{equation}
V_i(x,y)=2^{i+1}\sqrt{1+\frac{d(x,y)}{2^i}}-2^{i+1}
\end{equation}
and $\pi_n$ always coincide with the true midpoints.
By the Taylor expansion of the square root function at $1$, $V_i$ converges to $d$.
Note that, as shown in Remark~\ref{rem:afterprop2}, the iteration does not work if (\ref{eq:defpii}) is replaced by
\begin{equation}\label{eq:defpii_inter}
    \pi_i(x,y)\in\argmin_z \left(V_i(x,z)+V_i(z,y)\right).
\end{equation}
\end{example}

\begin{example}\label{example:cut}
We assume that $(X,d)$ has the midpoint property and the distance of any two points does not exceed a value $c \in \mathbb{R}$. Let $m(x,y)$ be the midpoint between $x$ and $y$. We consider the case where
\begin{equation}
C(x,y)=\begin{cases}
d(x,y) & d(x,y) \leq 1, \\
c & \text{otherwise}.
\end{cases}
\end{equation}
Then, if
\begin{equation}
\pi_0(x,y)=\begin{cases}
m(x,y) & d(x,y) \leq 2, \\
x & \text{otherwise},
\end{cases}
\end{equation}
(\ref{eq:defpii}) is satisfied. Thus,
\begin{equation}
V_1(x,y)=\begin{cases}
d(x,y) & d(x,y) \leq 2, \\
c & \text{otherwise}.
\end{cases}
\end{equation}
Since this continues,
\begin{equation}
V_i(x,y)=\begin{cases}
d(x,y) & d(x,y) \leq 2^i, \\
c & \text{otherwise}
\end{cases}
\end{equation}
and
\begin{equation}
\pi_i(x,y)=\begin{cases}
m(x,y) & d(x,y) \leq 2^{i+1}, \\
x & \text{otherwise}.
\end{cases}
\end{equation}
Therefore, $V_i$ converges to $d$ and $\pi_i$ converges to $m$.
Obviously, neither $V_i$ nor $\pi_i$ is continuous for small $i$.
Unlike the previous example, since $C$ give an upper bound of distance, this iteration works even if (\ref{eq:defpii}) is replaced by (\ref{eq:defpii_inter}). However, practically, this discontinuous $C$ is not suitable for learning. See the result of \textbf{Cut} in our experiments.
\end{example}

\subsection{Finsler Case}\label{subsec:Finsler Case}

We consider the case where $(X,d)$ is a Finsler manifold $(M,F)$. We assume that there exists a global coordinate system (a diffeomorphism to a subset) $f:M \hookrightarrow \mathbb{R}^d$. We consider the case where $C$ is defined as
\begin{equation}\label{eq:FinslerC}
    C(x,y):= F\left(x,df_x^{-1}(f(y)-f(x))\right),
\end{equation}
where $df_x:T_xM\rightarrow T_{f(x)}\mathbb{R}^d=\mathbb{R}^d$ is the differential of $f$ at $x$.

The following proposition states that Proposition~\ref{prop:uniqueness} is applicable to this case.

\begin{proposition}\label{prop:C}
$C$ satisfies Assumption~\ref{assump:C}.
\end{proposition}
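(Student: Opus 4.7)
My plan is to verify the two conditions of Assumption~\ref{assump:C} using the continuity and fiberwise positive homogeneity of the Finsler function $F$, together with the local minimizing-geodesic neighborhood cited in \S~\ref{subsec:Finsler}. For condition~(1), since $F(x,\cdot)$ is a Minkowski norm on $T_xM$, it is bounded below by a positive multiple of the Euclidean norm pulled back through $df_x$ from the coordinate chart. Hence $C(x,y_i)=F(x,df_x^{-1}(f(y_i)-f(x)))\to 0$ forces $|f(y_i)-f(x)|\to 0$, so $y_i\to x$ in the manifold topology, and $d(x,y_i)\to 0$ follows by continuity of $d$ (which holds because $d$ is weakly symmetric).

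For the upper bound in condition~(2), I would test $d(y,z)$ against the coordinate straight-line path $\gamma(t):=f^{-1}((1-t)f(y)+tf(z))$, for which $\gamma'(t)=df_{\gamma(t)}^{-1}(f(z)-f(y))$. By positive homogeneity of $F$ in the fiber, its length factors as $|f(z)-f(y)|\int_0^1 F(\gamma(t),df_{\gamma(t)}^{-1}(\hat v))\,dt$, where $\hat v$ is the unit direction of $f(z)-f(y)$. The map $(w,\hat v)\mapsto F(w,df_w^{-1}(\hat v))$ is continuous and hence uniformly continuous on a small compact product of a neighborhood of $x$ with $S^{d-1}$, so when $y,z\in B_\delta(x)$ with $\delta$ small, the integrand lies within a factor $(1\pm\varepsilon)$ of its value at $w=y$, giving $d(y,z)\le L(\gamma)\le(1+\varepsilon)C(y,z)$.

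For the lower bound, the idea is that any path $\sigma$ from $y$ to $z$ staying in a fixed neighborhood of $x$ satisfies $F(\sigma(t),\sigma'(t))\ge(1-\varepsilon)F(x,df_x^{-1}((f\circ\sigma)'(t)))$ by the same uniform continuity. Since $F(x,df_x^{-1}(\cdot))$ is a (possibly nonsymmetric) Minkowski norm on $\mathbb{R}^d$, its induced length functional satisfies the triangle inequality, so $\int_0^1 F(x,df_x^{-1}((f\circ\sigma)'(t)))\,dt\ge F(x,df_x^{-1}(f(z)-f(y)))$. One more appeal to continuity, comparing $F(x,\cdot)$ with $F(y,\cdot)$ on the unit sphere, gives $F(x,df_x^{-1}(f(z)-f(y)))\ge(1-\varepsilon)C(y,z)$, and composing yields the desired lower bound after relabeling $\varepsilon$.

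The principal obstacle is justifying the restriction to paths that stay in the small neighborhood where these uniform approximations hold. I would address this by invoking the existence of a neighborhood in which minimizing geodesics between any two points are unique and stay inside it (from \citet{bao2000introduction,amici2010convex}, as recalled in \S~\ref{subsec:Finsler}): for $y,z$ sufficiently close to $x$, the straight-line bound already gives an upper estimate on $d(y,z)$ small enough that any competitor exiting a fixed small neighborhood is strictly longer, so the infimum is attained by curves inside the region where the above comparison is valid.
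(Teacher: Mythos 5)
Your proposal is correct and follows essentially the same route as the paper's proof: compactness of the unit sphere gives a positive lower bound on $F(x,df_x^{-1}(\cdot))$ (condition~1) and a uniform comparison between the Finsler integrand and Minkowski approximations on a small ball (condition~2), combined with the fact that straight segments minimize Minkowski length and that minimizing geodesics between nearby points remain in the ball. The only organizational differences are cosmetic: the paper factors the comparison through auxiliary length functionals $L'$ (Minkowski at $x$) and $E$ (Euclidean), citing Busemann for both the segment-minimizing property and the two-sided Finsler/Euclidean comparability, whereas you compare directly to the Minkowski norm at $y$ for the upper bound and at $x$ for the triangle inequality, and handle the localization by an explicit length argument rather than by direct citation.
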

See \cref{appendix:proof_of_C} for the proof.

\begin{remark}
The Assumption~\ref{assump:C} is preserved under weighted averaging. Thus, even if a Finsler manifold does not have a global coordinate system, we can construct $C$ satisfying Assumption~\ref{assump:C} by using local coordinate systems and a partition of unity.
Note that values of $C$ outside a neighborhood of the diagonal can be anything that does not converge to zero.
\end{remark}

\begin{remark}
When $M$ is compact, once the desired midpoint function $\pi:M\times M\rightarrow M$ is found, we can construct minimizing geodesics for all pairs of points. Let $A:=\left\{N/2^n \middle| n\geq0,\,0\leq N\leq 2^n\right\}$. For any $x,y\in M$, by applying $\pi$ recursively, we can construct $\gamma:A\rightarrow M$ such that $d(x,\gamma(a))=ad(x,y)$ and $d(\gamma(a),y)=(1-a)d(x,y)$ for any $a\in A$. For any $r\in [0,1]$, we can take a non-decreasing sequence $a_1,a_2,\ldots\in A$ such that $\lim_i a_i = r$. Then, because $\gamma(a_i)$ is a forward Cauchy sequence with respect to $d$, it converges~\citep{bao2000introduction}. Therefore, we can extend the domain of $\gamma$ to $[0,1]$.
\end{remark}

\begin{remark}\label{rem:continuous_midpoint_property}
Since we assume that $\pi$ is continuous in Proposition~\ref{prop:uniqueness}, it is appliable only for spaces with the continuous midpoint property.
In Finsler case, this property always holds true locally, as mentioned in \S~\ref{subsec:Finsler}.
On the other hand, our method is practically feasible even if this property does not hold.
Indeed, the environment in \S~\ref{subsubsec:matusmoto} does not have this property (Remark~\ref{rem:continuous_midpoint_property2}).
\end{remark}

\begin{remark}
    Instead of using (\ref{eq:FinslerC}), if we set
\begin{equation}\label{eq:AnotherFinslerC}
    C(x,y):= \int_0^1 F\left(l(t), df_{l(t)}^{-1}(f(y)-f(x))\right)dt,
\end{equation}
where
\begin{equation}
    l(t):=f^{-1}((1-t)f(x)+tf(y)),
\end{equation}
then $C$ gives the length of the curve connecting points as a segment in the coordinate space, which is an upper bound of the distance. In that case, we can use the setting of \citet{jurgenson2020sub}. However, the integration in (\ref{eq:AnotherFinslerC}) is not always efficiently computable.
\end{remark}
\begin{remark}
    Instead of using (\ref{eq:FinslerC}), we could define $C$ as
\begin{equation}
    C(x,y):= \frac{1}{2}\left(F\left(x,df_x^{-1}(f(y)-f(x))\right)+F\left(y,df_y^{-1}(f(y)-f(x))\right)\right).    
\end{equation}
Because this definition is not biased toward the $x$ side, it might be more appropriate for approximating distances.
The main reason we do not adopt this definition in this paper is the difficulty of using it in sequential reinforcement learning.
\end{remark}

\begin{remark}
For pseudo-Finsler manifolds, which are similar to Finsler manifolds but do not necessarily satisfy the condition of positive definiteness, the distance function $d$ can be defined and is a weakly symmetric quasi-metric~\citep{javaloyes2011definition}. However, we do not expect that Proposition~\ref{prop:C} generally holds for pseudo-Finsler manifolds.
\end{remark}

\subsection{Free Space}\label{subsec:obstacles}
Here, we consider the situation where there exist a \textit{free space} (a path-connected closed subset) $M_{\mathrm{free}}\subseteq M$, and we want to generate paths inside $M_{\mathrm{free}}$.
Examples include motion planning with obstacles in environments and multi-agent motion planning where collisions between agents are to be avoided.

We modify $d$ to
\begin{equation}
d'(x,y):=d(x,y)+P(x,y)
\end{equation}
where
\begin{equation}\label{eq:defP}
P(x,y):=\begin{cases}
    0 & x,y \in M_{\mathrm{free}}\text{ or }x,y \notin M_{\mathrm{free}},\\
    c_{P} & \text{otherwise},
\end{cases}
\end{equation}
and $c_{P}\gg 0$ is a constant.
Clearly, $d'$ is also a quasi-metric. When $M_{\mathrm{free}}$ is compact and $c_{P}$ is large enough, midpoints of any pair of points in $M_{\mathrm{free}}$ with respect to $d'$ lie in $M_{\mathrm{free}}$.

Let
\begin{equation}\label{eq:obsC}
C'(x,y):=C(x,y)+P(x,y).
\end{equation}
Obviously, $C'$ satisfies Assumption~\ref{assump:C} with respect to $d'$.

\begin{remark}
Whereas a procedure to determine whether direct edges cause collisions is assumed in \citet{jurgenson2020sub}, we use only a procedure to determine whether states are safe and aim to generate dense waypoints in the free space.
By introducing a margin between the free space and the obstacle region, one can ensure that the entire path lies outside the obstacle region, provided that consecutive waypoints are sufficiently close.
\end{remark}


\section{Experiments}\label{sec:experiments}
In \S~\ref{subsec:algorithm}, by building on the previous section's results,
we describe our proposed learning algorithm for midpoint prediction.
In the following subsections,
we compared our method, which generates geodesics via policies trained by our algorithm, with baseline methods on several path planning tasks. The first two environments deal with asymmetric metrics that change continuously (local planning tasks), while the other three environments have simple and symmetric metrics but involve obstacles (global planning tasks).

\subsection{Learning Algorithm}\label{subsec:algorithm}

Let $(X,d)$ be a pseudo-quasi-metric space, and let $C$ be a function satisfying Assumption~\ref{assump:C}.

We simultaneously train two networks with parameters $\theta$ and $\phi$: the \textit{actor} $\pi_{\theta}$, which predicts the midpoints between two given points, and the \textit{critic} $V_{\phi}$, which predicts the distances between two given points.
The critic learns to predict distances from the lengths of the sequences generated by the actor recursively predicting midpoints, where the length of a sequence is calculated as the sum of the values of $C$ for consecutive waypoints. On the other hand, the actor learns to predict midpoints from the critic's predictions.

We train only one actor and one critic, unlike in \S~\ref{subsec:iterative}. See also Remark~\ref{rem:single actor critic}.

The network for actor $\pi_{\theta}$ has a form for which the reparameterization trick~\citep{haarnoja2018soft} can be applied, that is, a sample is drawn by computing a deterministic function of the input, parameters $\theta$, and an independent noise. Henceforth, we abuse notations and denote a sampled prediction from $\pi_{\theta}(\cdot| s,g)$ by $\pi_{\theta}(s,g)$ even if it is not deterministic.

\begin{algorithm}[ht]
\caption{Actor-Critic Midpoint Learning}
\label{algorithm:Actor-Critic}
\begin{algorithmic}[1]
\State{Initialize $\theta$, $\phi$}
\While{learning is not done}\label{line:learningend}
\State{$\mathit{data}\gets\emptyset$}
\While{$\mathit{data}$ is not enough}\label{line:datacollectingend}
\State{$\mathit{data}\gets \mathit{data}\cup\mathrm{CollectData}(\pi_{\theta})$}
\EndWhile
\State{Split $\mathit{data}$ into $\mathit{batches}$}
\For{$\mathit{epoch}=1,\ldots,N_{\mathrm{epochs}}$}
\ForAll{$b\in \mathit{batches}$}
\State{Update $\phi$ with $\nabla_{\phi} \sum_{(s,g,c)\in b} L_{\mathrm{c}}(s,g,c)$, which is defined in (\ref{eq:Lc})}
\State{Update $\theta$ with $\nabla_{\theta} \sum_{(s,g,c)\in b} L_{\mathrm{a}}(s,g)$, which is defined in (\ref{eq:totalactorloss})}
\EndFor
\EndFor
\EndWhile
\State
\Procedure{CollectData}{$\pi$}
\State{Decide depth $D$}\label{line:decidedepth}
\State{Sample two points $p_{0},p_{2^D}$}\label{line:sampling}
\For{$i=0,\ldots,D-1$ and $j=0,\ldots,2^i-1$}
\State{$p_{2^{D-i-1}(2j+1)}\gets\pi_{\theta}(p_{2^{D-i}j},p_{2^{D-i}(j+1)})$}
\EndFor
\State{$\mathit{data}\gets\{(p_{0},p_{0},0),(p_{1},p_{1},0),\ldots,(p_{2^D},p_{2^D},0)\}$}
\State{$c_{D,0},c_{D,1}\ldots,c_{D,2^D-1}\gets C(p_{0},p_{1}),C(p_{1},p_{2}),\ldots,C(p_{2^D-1},p_{D,2^D})$}
\State{$\mathit{data}\gets \mathit{data}\cup\{(p_{0},p_{1},c_{D,0}),(p_{1},p_{2},c_{D,1}),\ldots,(p_{2^D-1},p_{2^D},c_{D,2^D-1})\}$}
\For{$i=D-1,\ldots,0$ and $j=0,\ldots,2^i-1$}
\State{$c_{i,j}\gets c_{i+1,2j}+c_{i+1,2j+1}$}
\State{$\mathit{data}\gets\mathit{data}\cup\{(p_{2^{D-i}j},p_{2^{D-i}(j+1)},c_{i,j})\}$}
\EndFor
\State\Return{$\mathit{data}$}
\EndProcedure
\end{algorithmic}
\end{algorithm}

\cref{algorithm:Actor-Critic} gives the pseudocode for our method.

We define the critic loss $L_{\mathrm{c}}$ for $s,g\in X$ with an estimated distance $c$ as
\begin{equation}\label{eq:Lc}
    L_{\mathrm{c}}:= L_{\mathrm{SLE}} + L_{\mathrm{symm}}^{\mathrm{c}},
\end{equation}
where
\begin{equation}
    L_{\mathrm{SLE}}:=\left(\log \left(V_{\phi}(s,g)+1\right)-\log \left(c+1\right)\right)^2.
\end{equation}
We take logarithms to reduce influence of large values.
If the distance $d$ is known to be symmetric, then setting the second term to
\begin{equation}
L_{\mathrm{symm}}^{\mathrm{c}}:=\left(\log \left(V_{\phi}(s,g)+1\right)-\log \left(V_{\phi}(g,s)+1\right)\right)^2
\end{equation}
ensures that $V_{\phi}$ is also symmetric. Otherwise, $L_{\mathrm{symm}}^{\mathrm{c}}:=0$.

The actor loss $L_{\mathrm{a}}$ for $s,g\in X$ is defined as
\begin{equation}\label{eq:totalactorloss}
L_{\mathrm{a}}:=L_{\mathrm{mid}}+L_{\mathrm{sm}}+L_{\mathrm{symm}}^{\mathrm{a}},
\end{equation}
where the first term,
\begin{equation}\label{eq:actorloss}
L_{\mathrm{mid}}:=V_{\phi}(s,\pi_{\theta}(s,g))^2+V(\pi_{\theta}(s,g),g)^2,
\end{equation}
is intended to make $\pi_{\theta}(s,g)$ a midpoint between $s$
 and $g$. The second term,
\begin{equation}
L_{\mathrm{sm}}:=V_{\phi}(\pi_{\theta}(s,g),\pi_{\theta}(\pi_{\theta}(s,\pi_{\theta}(s,g)),\pi_{\theta}(\pi_{\theta}(s,g),g)))^2
\end{equation}
ensures that the midpoint between the point of $1:3$ division and the point of $3:1$ division equals the midpoint between the original pair. This term aims to smooth the generated paths.
If the distance $d$ is known to be symmetric, then the third term
\begin{equation}
L_{\mathrm{symm}}^{\mathrm{a}}:=V_{\phi}(\pi_{\theta}(s,g),\pi_{\theta}(g,s))^2
\end{equation}
ensures that $\pi_{\theta}$ is also symmetric. Otherwise, $L_{\mathrm{symm}}^{\mathrm{a}}:=0$.

The data for training is collected using the actor $\pi_{\theta}$ with the current parameters.
We sample two points from $X$ and generate a sequence of points by repeatedly inserting points between adjacent points via $\pi_{\theta}$. Adjacent pairs of points at each iteration are collected as data.
The estimated distance between a collected pair of points is simply calculated as the sum of values of $C$ for adjacent pairs between the points in the final sequence.
In other words, we use a Monte Carlo method.
The number of recursion, which is denoted by $D$ and called \textit{depth},  is gradually increased during the learning process.

After collecting enough data, we update the parameters of the actor and critic according to the gradients of the sum of the aforementioned losses, via an optimization algorithm. We repeat this process of data collection and optimization a sufficient number of times.

\begin{remark}\label{rem:single actor critic}
For learning efficiency, we train the single actor and the single critic in this algorithm, 
while we theoretically consider series of actors and critics in \S~\ref{subsec:iterative}.
Instead, we gradually increase the depth for data collection to train the critic, so that $C$ is evaluated for closer pairs of points. In this way, $V$ and $\pi$ start by learning values of $V_0$ and $\pi_0$ and gradually learn the values of $V_i$ and $\pi_i$ for higher $i$. Note that, since $C$ is closer to the true distance $d$ for pairs of closer points, $V_i$ and $\pi_i$ converge more quickly in the region of pairs of closer points.
\end{remark}

\begin{remark}
While we use the Monte Carlo method to calculate the critic's target values, we could use TD($\lambda$)~\citep{sutton2018reinforcement} for $0\leq \lambda \leq 1$ instead, as $c_{D,j}:=C(p_{j},p_{j+1})$ and, for $i=D-1,\ldots,0$,
    \begin{equation}
    c_{i,j}:=
    (1-\lambda)(V_{\phi}(p_{2^{D-i}j},p_{2^{D-i-1}(2j+1)})
    +V_{\phi}(p_{2^{D-i-1}(2j+1)},p_{2^{D-i}(j+1)}))
    +\lambda(c_{i+1,2j}+c_{i+1,2j+1}).
    \end{equation}
\end{remark}

\subsection{Tasks and Evaluation Method}

We compared the methods in terms of their success rate on the following task.
A Finsler manifold $(M,F)$ with a global coordinate system $f:M\hookrightarrow \mathbb{R}^d$ (and a free space $M_{\mathrm{free}}\subseteq M$) is given as an environment.
The number $n$ of segments to approximate paths and a proximity threshold $\varepsilon>0$ are also fixed. For our method, $n$ must be a power of two: $n=2^{D_{\mathrm{max}}}$, where $D_{\mathrm{max}}$ is the midpoint trees' depth for evaluation.
When two points $s,g\in M_{\mathrm{free}}$ are given, we want to generate a sequence $s=p_0,p_1,\ldots,p_n=g$ of points such that no value of $C$ for two consecutive points is greater than $\varepsilon$, where $C$ is defined by (\ref{eq:FinslerC}) and, for cases with obstacles, (\ref{eq:obsC}).
If the points generated by a method satisfy this condition, then it is considered successful in solving the task; otherwise, it is considered to have failed.
Note that, for cases with obstacles, when $c_P>\varepsilon$, successes imply that all waypoints are lying on the free space.

For each environment, we randomly generated $100$ pairs of points from the free space, before the experiment. During training,
we evaluated the models regularly by solving the tasks for the prepared pairs and recorded the success rates.
We ran each method with different random seeds ten times for the environment described in \S~\ref{subsubsec:matusmoto} and five times for the other environments.

For each environment, the total number $T$ of timesteps was fixed for all methods.
Therefore, at Line~\ref{line:learningend} of our method, we continue learning until the number of timesteps reaches the defined value.
Timesteps were measured by counting the evaluation of $C$ during training. In other words, for sequential reinforcement learning (see the next subsection), the timesteps have their conventional meaning. For the other methods, one generation of a path (one cycle) with depth $D$ is counted as $2^D$ timesteps.

We mainly used success rate, not path length, for evaluation because metrics are only be defined locally and lengths thus cannot be calculated unless the success condition is satisfied. Furthermore, the evaluation by success rate allows for fair comparison with the baseline method using sequential generation.
However, we also compared lengths of generated paths in Appendix~\ref{appendix:length comparison}.

\subsection{Compared Methods}
The baseline methods were as follows:
\begin{itemize}
    \item \textbf{Sequential Reinforcement Learning (Seq)}: We formulated sequential generation of waypoints as a conventional, goal-conditioned reinforcement learning environment. The agent moves to the goal step by step in $M$.
    If the agent is at $p$, it can move to $q$ such that $F\left(p,df_p^{-1}(f(q)-f(p))\right)=\varepsilon$.
    If $q$ is outside the free space $M_{\mathrm{free}}$, the reward $R$ is set to $-c_{P}$ and the episode ends as a failure.
    Otherwise, $R$ is defined as
    \begin{equation}\label{eq:reward}
        R:= -\varepsilon + F\left(g,df_g^{-1}(f(g)-f(p))\right)
        -F\left(g,df_g^{-1}(f(g)-f(q))\right),
    \end{equation}
    where $g$ is the goal.\footnote{We do not define reward by using $C$ because $C(p,g)$ does not necessarily decrease when $p$ gets closer to $g$.}
    The discount factor is set to $1$.
    An episode ends and is considered a success when the agent reaches a point $p$ that satisfies $F\left(p,df_p^{-1}(f(g)-f(p))\right)<\varepsilon$.
    When the episode duration reaches $n$ steps without satisfying this condition, the episode ends and is considered a failure.
    
    We used Proximal Policy Optimization (PPO)~\citep{schulman2017proximal} to solve reinforcement learning problems with this formulation, which is a widely used sophisticated algorithm and has been used for path planning in recent studies~\citep{kulathunga2022reinforcement,qi2022uav}.
    
    \item \textbf{Policy Gradient (PG)}: We modified the method in \citet{jurgenson2020sub} to predict midpoints.
    For each $D=1,\ldots, D_{\mathrm{max}}$,
    a stochastic policy $\pi_D$ is trained to predict midpoints with depth $D$.
    To generate waypoints, we apply the policies in descending order of the index.
    We train the policies in ascending order of the index by the policy gradient.
    
    To predict midpoints, the value to be minimized in training is changed.
    Let $\rho(\pi_1, \ldots, \pi_D)$ be the distribution of $\tau:=(p_0,\ldots,p_{2^D})$, where $p_0$ and $p_{2^D}$ are sampled from the predefined distribution on $M$ and $p_{2^{i-1}(2j+1)}$ is sampled from the distribution $\pi_i\left(\cdot \middle| p_{2^{i}j}, p_{2^{i}(j+1)}\right)$.
    Let $\theta_D$ denote the parameters of $\pi_D$.
    Instead of minimizing the expected value of $\sum_{i=0}^{2^D-1}C(p_i,p_{i+1})$ as in the original method,
    we train $\pi_D$ to minimize the expected value of
    \begin{equation}\label{eq:PGloss}
        c_{\tau}:=\left(\sum_{i=0}^{2^{D-1}-1}C(p_i,p_{i+1})\right)^2+
        \left(\sum_{i=2^{D-1}}^{2^{D}-1}C(p_i,p_{i+1})\right)^2.
    \end{equation}
    Here, we use
    \begin{equation}\label{eq:PGtraining}
    \nabla_{\theta_D}\mathbb{E}_{\rho(\pi_1, \ldots, \pi_D)}\left[c_\tau\right]=
    \mathbb{E}_{\rho(\pi_1, \ldots, \pi_D)}\left[\left(c_{\tau}-b\left(p_0,p_{2^D}\right)\right)\nabla_{\theta_D}\log\pi_D\left(p_{2^{D-1}}\middle|p_0,p_{2^D}\right)\right],
    \end{equation}
    where $b$ is a baseline function.
%
    
    Note that, when the model is evaluated during training, if the current trained policy is $\pi_D$ ($1\leq D \leq D_{\mathrm{max}}$), then evaluation is performed with depth $D$ ($2^D$ segments). (The other methods are always evaluated with $n=2^{D_{\mathrm{max}}}$ segments.)
\end{itemize}

For our proposed method described in \S\ref{subsec:algorithm}, we tried two scheduling strategies to increase the trees' depth for training from zero to $D_{\mathrm{max}}$, under the condition of fixed total timesteps.

\begin{itemize}
\item \textbf{Timestep-Based Depth Scheduling (Our-T)}: For each depth, training lasts the same number of timesteps.
\item \textbf{Cycle-Based Depth Scheduling (Our-C)}: For each depth, training lasts the same number of calls to the data collection procedure (cycles).
\end{itemize}
More precisely, at Line~\ref{line:decidedepth} in \cref{algorithm:Actor-Critic}, 
for timestep-based depth scheduling,
the depth is $\lfloor t/t_d \rfloor$, where $t$ is the number of timesteps at the current time and $t_d:=\lfloor T/D_{\mathrm{max}}\rfloor+1$.
For cycle-based depth scheduling,
the depth for the $c$-th call to the data collection procedure is $\lfloor c/c_d\rfloor$, where $c_d:=\lfloor T/(2^{D_{\mathrm{max}}+1}-1)\rfloor+1$. 
Note that \textbf{Our-T} provides more training for low depths than \textbf{Our-C} does.

In addition, we ran the following variants of our method.

\begin{itemize}
\item \textbf{Intermediate Point (Inter)}: Instead of (\ref{eq:totalactorloss}), we use the following actor loss:
\begin{equation}
L_{\mathrm{a}}:=V_{\phi}(s,\pi_{\theta}(s,g))+V_{\phi}(\pi_{\theta}(s,g),g)+L_{\mathrm{symm}}^{\mathrm{a}},
\end{equation}
where, if $d$ is known to be symmetric, 
\begin{equation}
L_{\mathrm{symm}}^{\mathrm{a}}:= V_{\phi}(\pi_{\theta}(s,g),\pi_{\theta}(g,s)).
\end{equation}
Otherwise, $L_{\mathrm{symm}}^{\mathrm{a}}:=0$.
This means that $\pi_{\theta}$ learns to predict intermediate points that are not necessarily midpoints.
\item \textbf{2:1 Point (2:1)}: Instead of (\ref{eq:totalactorloss}), we use the following actor loss:
\begin{equation}
L_{\mathrm{a}}:=V_{\phi}(s,\pi_{\theta}(s,g))^2+2V_{\phi}(\pi_{\theta}(s,g),g)^2.
\end{equation}
By Remark~\ref{rem:reason_internal}, this means that $\pi_{\theta}$ learns to predict $2:1$ points instead of midpoints.

\item \textbf{Intermediate Point with Cut (Cut)}: To overcome the flaw of \textbf{Inter} in Remark~\ref{rem:afterprop2}, we use a modified version $C_{\varepsilon}$ of the function $C$ as
\begin{equation}
C_{\varepsilon}(x,y):=\begin{cases}
C(x,y) & \text{if } C(x,y)<\varepsilon,\\
c_{\mathrm{cut}} & \text{otherwise},
\end{cases}
\end{equation}
where $c_{\mathrm{cut}} \gg 0$ is a constant. We set $c_{\mathrm{cut}} := 30$.
\end{itemize}

For each environment, the depth scheduling method with better results in our proposed methods was adopted for these variants. Specifically,
we used cycle-based depth scheduling for the environments in \S~\ref{subsubsec:matusmoto} and \S~\ref{subsubsec:carlike}, and timestep-based depth scheduling for the environments in \S~\ref{subsubsection:2Dobstacles}, \S~\ref{subsubsection:robot}, and \S~\ref{subsec:treeagent}.

Note that \textbf{Seq} and \textbf{Cut} have a slight advantage because they use values of $\varepsilon$ while others do not.

\subsection{Environments}
We experimented in the following five environments.
\subsubsection{Matsumoto Metric}\label{subsubsec:matusmoto}

We consider the Matsumoto metric~(Example~\ref{exam:matsumoto}) where the region $M$ is the unit disk and the height function is $h(p):=-\|p\|^2$. Intuitively, this corresponds to a mountainous field with a peak at the center.

\begin{figure}[ht]
    \centering
    \includegraphics[width=0.25\columnwidth]{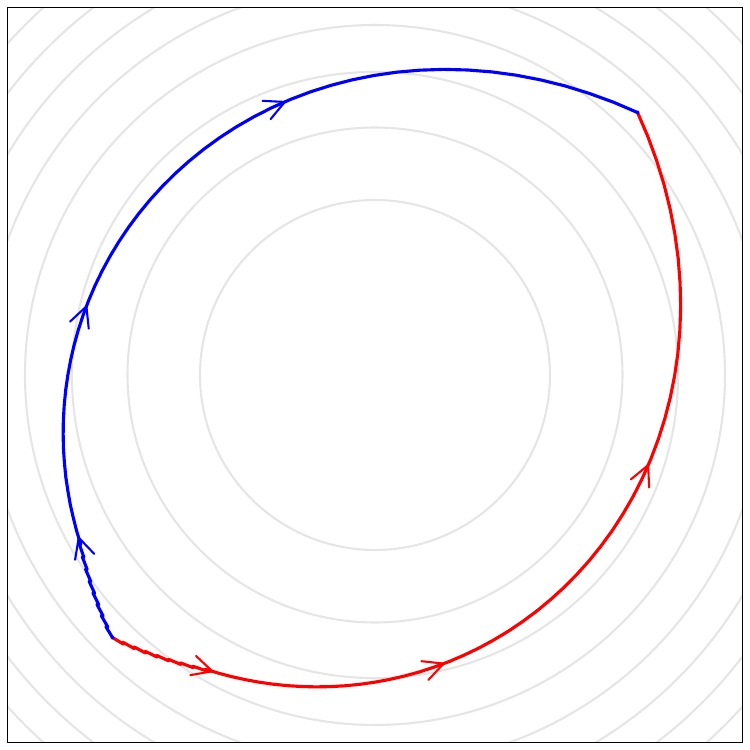}
    \caption{Example of switching geodesics.}
    \label{fig:matsumoto_example}
\end{figure}

\begin{remark}\label{rem:continuous_midpoint_property2}
This environment does not have the continuous midpoint property, because minimizing geodesics connecting two points on the opposite sides of the peak can switch between counterclockwise and clockwise routes, as illustrated in Figure~\ref{fig:matsumoto_example}.
\end{remark}

For this environment, 
we set the number of segments $n=64$ ($D_{\mathrm{max}}=6$), the proximity threshold $\varepsilon=0.1$, and the total number of timesteps $T=2\times10^7$.

\subsubsection{Unidirectional Car-Like Constraints}\label{subsubsec:carlike}
Inspired by the cost function for trajectories of car-like non-holonomic vehicles~\citep{rosmann2017kinodynamic}, we define an asymmetric Finsler metric for unidirectional car-like vehicles.

Let $M:=U\times S^1\subseteq \mathbb{R}^3$ be a configuration space for car-like vehicles, where $U\subseteq \mathbb{R}^2$ is a region with the standard coordinate system $x,y$ and $S^1$ is the unit circle with the coordinate $\theta$.
We define $F:TM\rightarrow [0,\infty)$ as follows:
\begin{equation}
F((p,\theta), (v_x,v_y,v_{\theta})):=\sqrt{v_x^2+v_y^2+c_{\mathrm{c}}(h^2+\xi^2)},
\end{equation}
where
\begin{equation}
h:=-v_x\sin(\theta)+v_y\cos(\theta),\,
\end{equation}
\begin{equation}\label{eq:xi}
\xi:=\max\left\{r_{\mathrm{min}}|v_{\theta}|-v_x\cos(\theta)-v_y\sin(\theta),0\right\},
\end{equation}
$c_{\mathrm{c}}$ is a penalty coefficient, and  $r_{\mathrm{min}}$ is a lower bound of radius of curvature.
The term $h$ penalizes moving sideways, while the term $\xi$ penalizes moving backward and turning sharply.

Strictly speaking, this metric is not a Finsler metric, as $F$ is not smooth on the entire $TM\setminus 0$.
However, we define the distance $d$ and function $C$ by using $F$ in the same way for a Finsler metric.
In the formula (\ref{eq:FinslerC}) for $C$, we take a value in $[-\pi,\pi]$ as the difference between two angles.

\begin{remark}
$F((p,\theta),-)^2$ is convex, and its Hessian matrix is positive definite where it is smooth.
In terms of \citet{fukuoka2021mollifier}, $F$ is a $C^0$-Finsler structure, which can be approximated by Finsler structures.
\end{remark}

\begin{remark}
While the vehicle model in \citet{rosmann2017kinodynamic} is bidirectional, that is, it can move backward, our model is unidirectional, that is, it can only move forward.
Unidirectionality seems essential for modeling by a Finsler metric. If we replace (\ref{eq:xi}) with
$\xi:=\max\left\{r_{\mathrm{min}}|v_{\theta}|-|v_x\cos(\theta)+v_y\sin(\theta)|,0\right\}$,
then $F$ cannot be approximated by Finsler structures because it does not satisfy subadditivity ($F(x,v+w)\leq F(x,v)+F(x,w)$).
\end{remark}

We take the unit disk as $U$ and set $c_{\mathrm{c}}=100$, $r_{\mathrm{min}}=0.2$, $n=64$ ($D_{\mathrm{max}}=6$), $\varepsilon=0.2$, and $T=8\times 10^7$.

\subsubsection{2D Domain with Obstacles}\label{subsubsection:2Dobstacles}
We consider a simple 2D domain with rectangular obstacles, as shown in \cref{fig:obstacle_trajectories}, where the white area is the free space, which is taken from an experiment in \citet{jurgenson2020sub}.
The metric is simply Euclidean.
Note that we also consider the outside boundary unsafe.

We set $n=64$ ($D_{\mathrm{max}}=6$), $\varepsilon=0.1$, $T=4\times 10^7$, and the collision penalty $c_P=10$.

\subsubsection{7-DoF Robotic Arm with an Obstacle}\label{subsubsection:robot}
This environment is defined for motion planning of the Franka Panda robotic arm, which has seven degrees of freedom (DoFs), in the presence of a wall obstacle.
The space is the robot's configuration space, whose axes correspond to joint angles.
The metric is simply Euclidean in the configuration space. The obstacle is defined as $\{x > 0.1,\,-0.1 < y < 0.1\}$, and a state is considered unsafe if at least one of the segments connecting adjacent joints intersects the obstacle. We ignored self-collision.

We set $n=64$ ($D_{\mathrm{max}}=6$), $\varepsilon=0.2$, $T=4\times 10^7$, and $c_P=10$.

\subsubsection{Three Agents in the Plane}\label{subsec:treeagent}
We consider three agents in $U:=[-1,1]\times[-1,1] \subseteq \mathbb{R}^2$.
The configuration space is $M:=U^3$ and the metric is defined as the sum of Euclidean metrics of three agents. A state is considered safe if all distances of pairs of agents are not smaller than $d_{\mathrm{thres}}:=0.5$, that is,
\begin{equation}
M_{\mathrm{free}}:=\left\{(p_0,p_1,p_2)\in M \middle|\, \|p_i-p_j\| \geq d_{\mathrm{thres}}\text{ for } i\neq j \right\}.
\end{equation}

We set $n=64$ ($D_{\mathrm{max}}=6$), $\varepsilon=0.2$, $T=8\times 10^7$, and $c_P=10$.

\subsection{Results and Discussion}\label{subsec:results}

\begin{figure*}[ht]
    \centering
    \includegraphics[width=\linewidth]{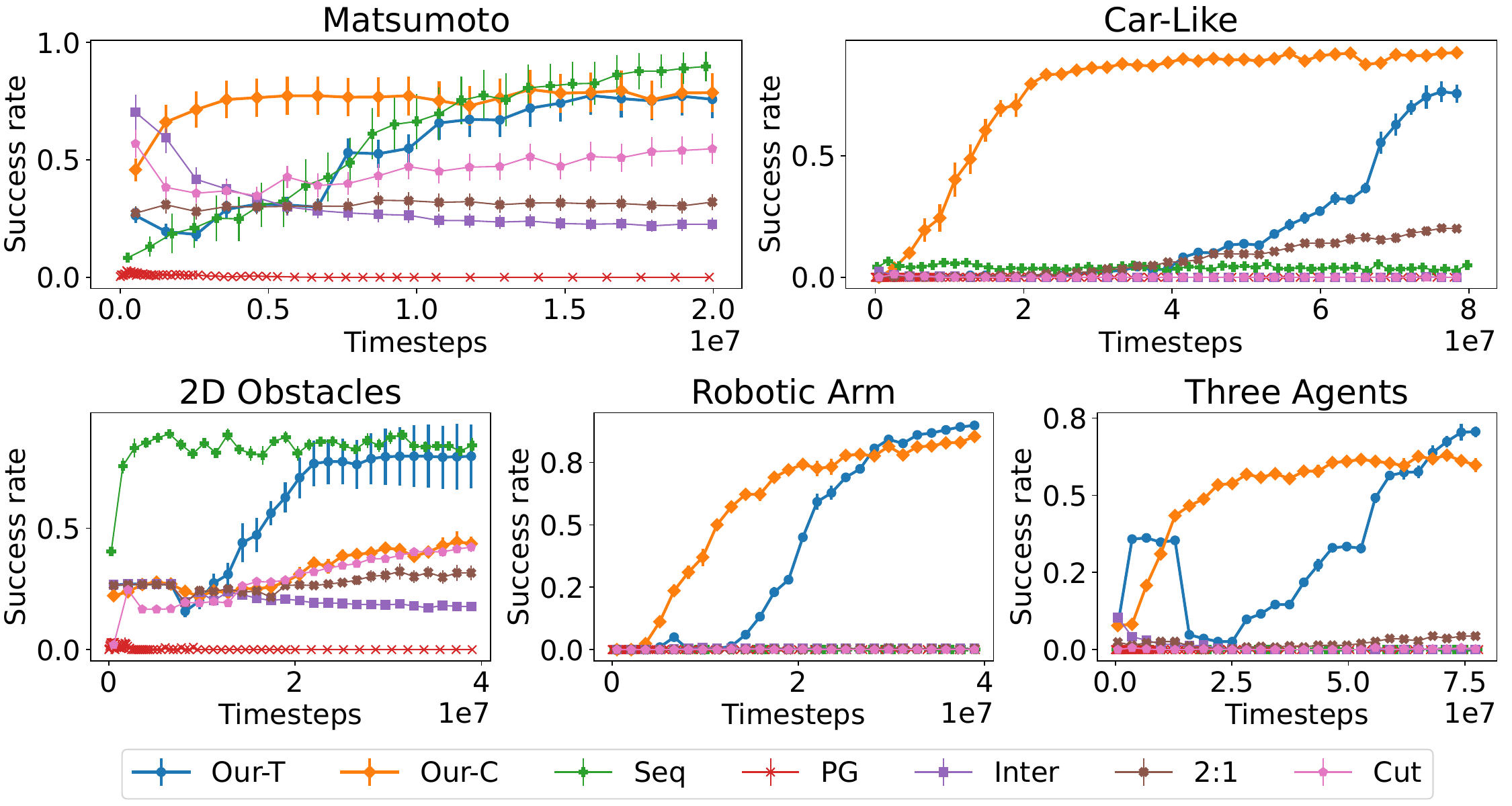}
    \caption{Success rate plots.}
    \label{fig:results}
\end{figure*}

\cref{fig:results} shows the learning curves of the success rates for all methods averaged over random seeds.
The error bars represent the standard errors.
While \textbf{Seq} achieved the best success rate in the Matsumoto and 2D obstacles environments, its success rates were low and our proposed method had the best in the other environments.
It may be much more difficult to determine directions of the agent sequentially in higher dimensional environments than in two-dimensional ones.
Also, the approximation of distances in (\ref{eq:reward}) may be close to the true values in the Matsumoto environment, but not in the car-like environment.
Note that, while it may be possible to improve learning success rate for \textbf{Seq} by engineering rewards, our method was successful without adjusting rewards.

The success rates for \textbf{Inter} decreased as training progressed in the Matsumoto and 2D obstacle environments, which may have resulted from convergence to biased generation, as mentioned in Remark~\ref{rem:afterprop2}.

While the success rates for \textbf{Cut} did not decrease as for \textbf{Inter}, it was less successful than our method and failed to learn in the environments with dimensions greater than two.
The discontinuity of prediction functions during training, as suggested by Example~\ref{example:cut}, may hinder learning.

The low success rates for \textbf{2:1} were, at least partially, due to its intentional uneven generation of waypoints.
On the other hand, it was the only method, aside from our proposed ones, that showed an improvement in the car-like environment, indicating that learning is possible even when generation points are fixed to a ratio other than 1:1. See also \cref{appendix:length comparison}.

\begin{figure}[ht]
\centering
\begin{subfigure}{0.32\columnwidth}
    \centering
    \includegraphics[width=1.0\columnwidth]{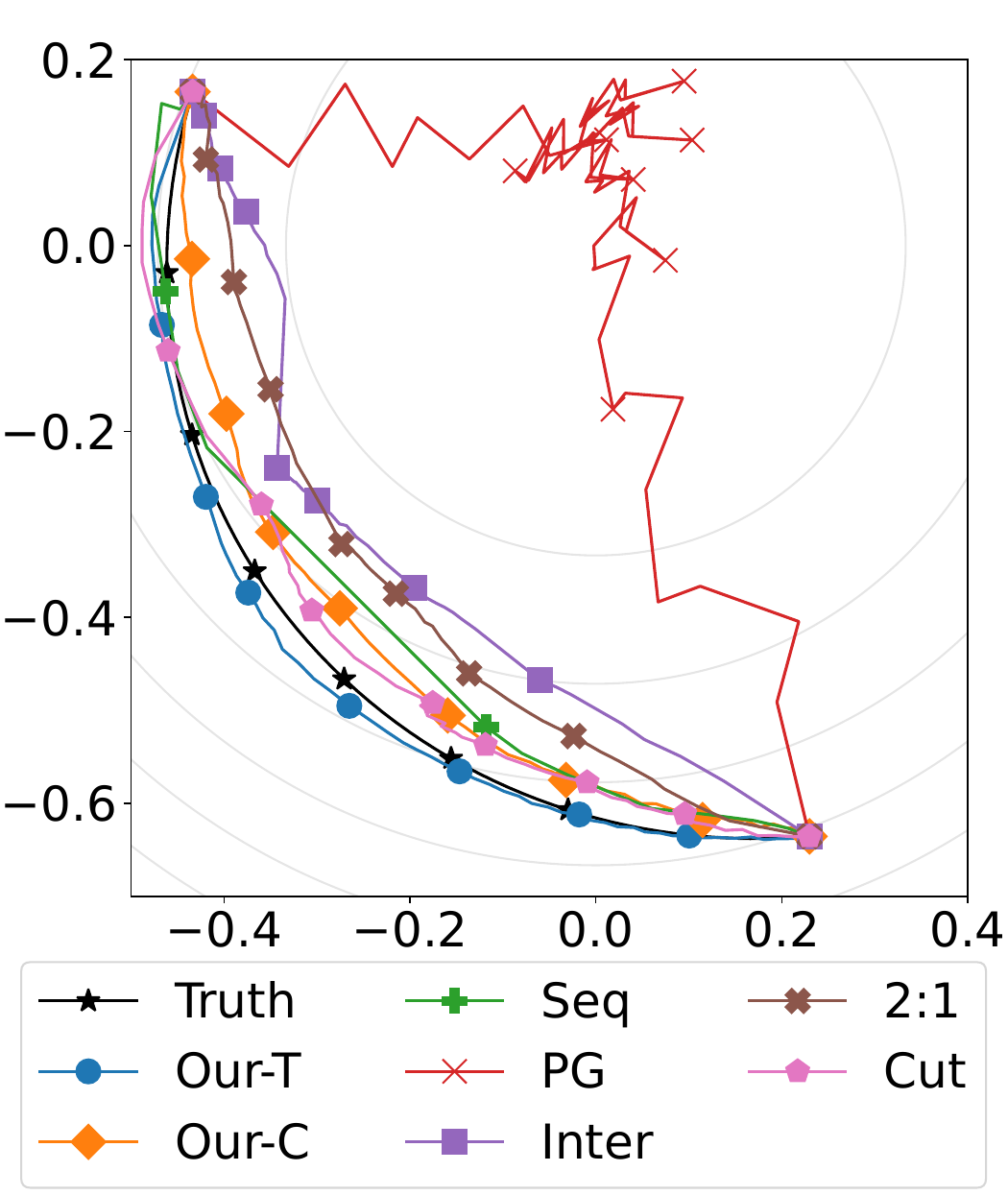}
    \caption{Matsumoto}
    \label{fig:trajectories}
\end{subfigure}
\begin{subfigure}{0.32\columnwidth}
    \centering
    \includegraphics[width=1.0\columnwidth]{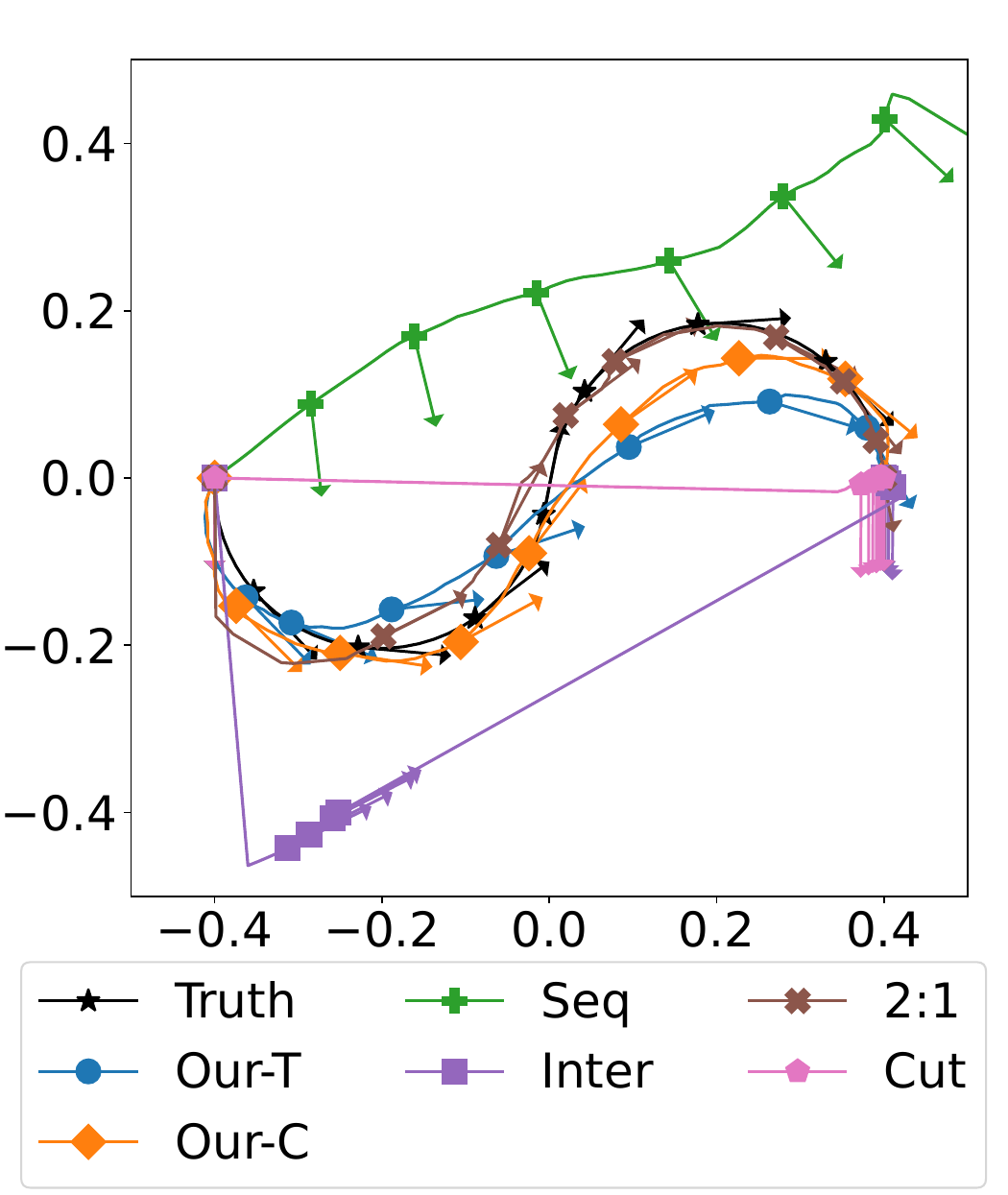}
    \caption{Car-Like}
    \label{fig:trajectories_car3}
\end{subfigure}
\begin{subfigure}{0.32\columnwidth}
    \centering
    \includegraphics[width=1.0\columnwidth]{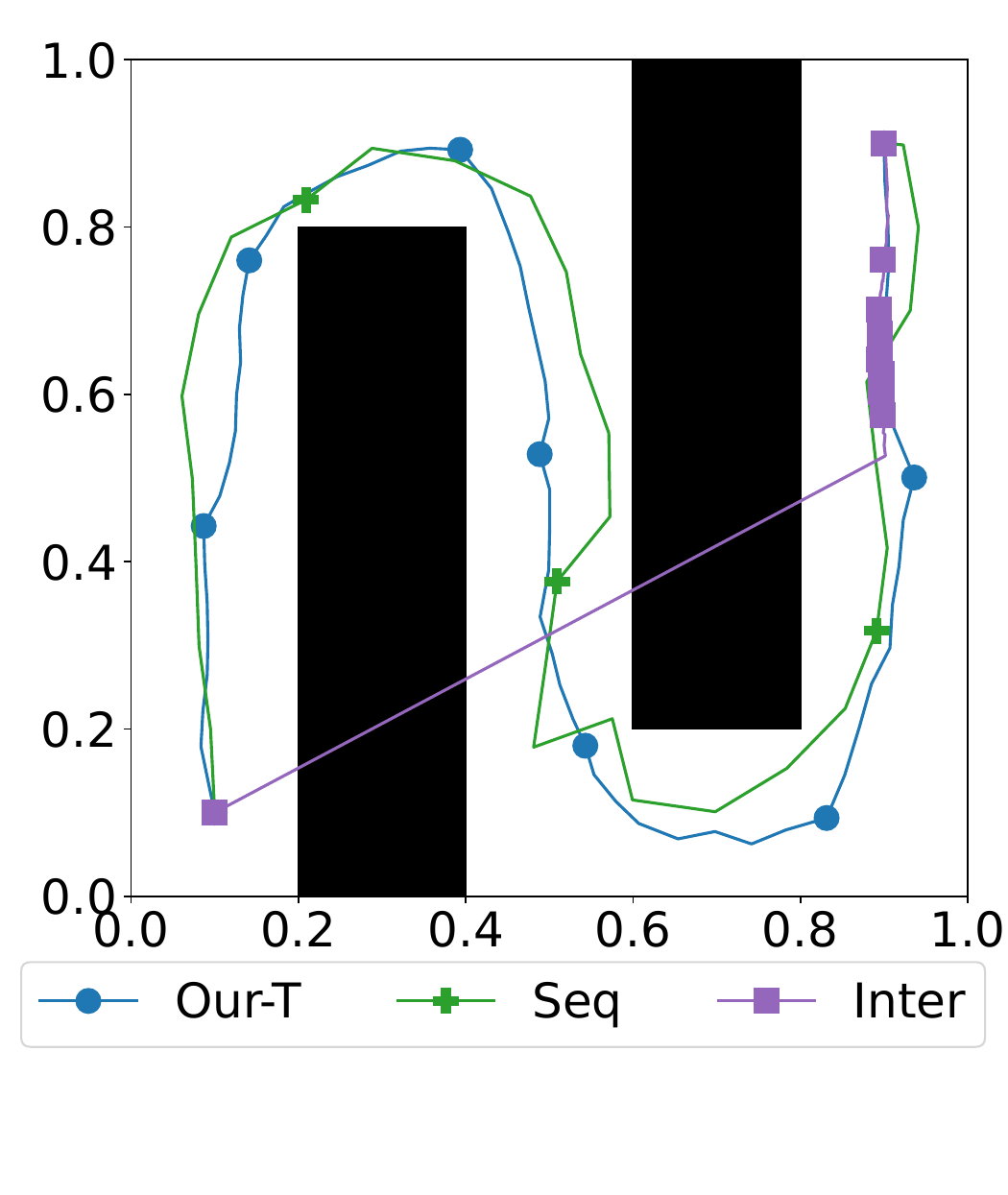}
    \caption{2D Obstacles}
    \label{fig:obstacle_trajectories}
\end{subfigure}
    \caption{Examples of generated paths.}
\end{figure}

\cref{fig:trajectories} shows examples of paths in the Matsumoto environment that were generated by policies trained by compared methods. Each eighth point is marked, and the circles represent contours. The \textbf{Truth} curve represents the ground truth of the minimizing geodesic with points dividing eight equal-length parts.
In this example, all methods except \textbf{PG} could generate curves close to the ground truth.
While \textbf{Our-T} and \textbf{Our-C} generated waypoints near the points dividing the true curve equally,
both \textbf{Inter} and \textbf{2:1} produced nonuniform waypoints.

\textbf{PG} was unable to solve most tasks in all environments and failed to even generate a smooth curve in the Matsumoto environment, which may have been simply due to insufficient training, or due to instability of learning without critic for deep trees.
Note that \textbf{PG} gets only one tuple of training data per path generation, whereas our method gets several, on the order of $2^D$ where $D$ is the depth, per path generation. 
Note also that, compared with the experiments in \citet{jurgenson2020sub}, trees were deeper and the success condition was stricter in ours.

\cref{fig:trajectories_car3} shows examples of paths generated by policies trained by methods except \textbf{PG} in the car-like environment. Orientations of the agent are represented by arrows. \textbf{Our-C} and \textbf{2:1} succeeded in generating paths close to the ground truth, while the waypoints for \textbf{2:1} were uneven. The path for \textbf{Our-T} was slightly straighter. Other methods failed to generate valid paths.

\cref{fig:obstacle_trajectories} shows examples of paths generated by policies trained by \textbf{Our-T}, \textbf{Seq}, and \textbf{Inter} in the 2D domain with obstacles. While the waypoints for \textbf{Our-T} and \textbf{Seq} avoided the obstacles, those for \textbf{Inter} skipped the obstacles. This is natural because, in \textbf{Inter}, it is not required to generate waypoints densely.

\begin{figure}[ht]
    \centering
    \includegraphics[width = \columnwidth]{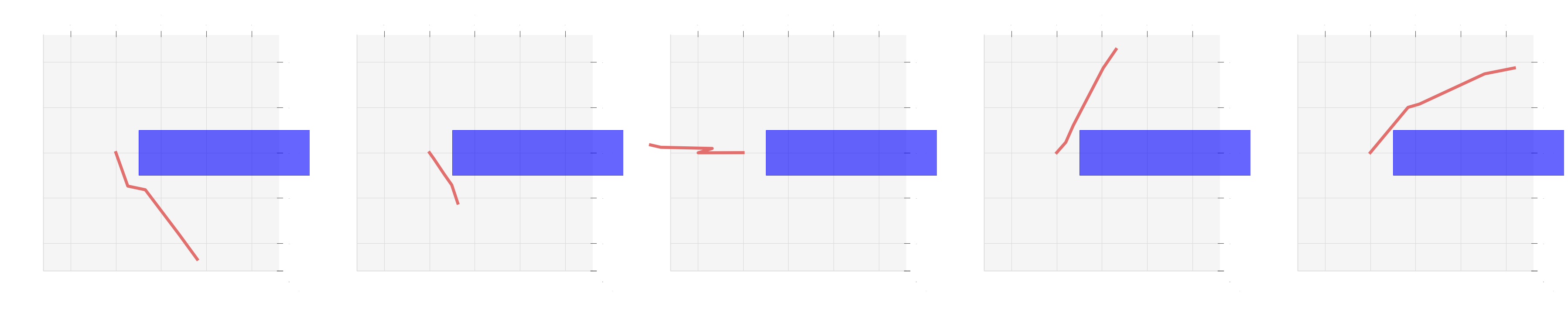}
    \caption{Generated motion for the robotic arm.}
    \label{fig:panda_motions}
\end{figure}

\cref{fig:panda_motions} shows a top view visualization of an example of a robotic arm motion generated by a policy learned by \textbf{Our-T}. The obstacle is illustrated in blue.
Despite the initial and final positions of the arm tips being on opposite sides of the obstacle, 
the motion succeeded in avoiding the obstacle.

\begin{figure}[ht]
    \centering
    \includegraphics[width = \columnwidth]{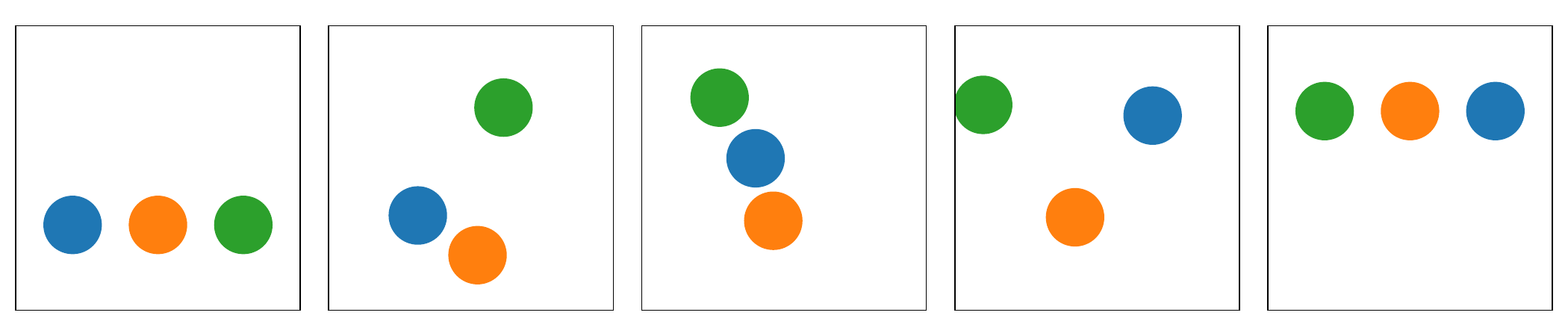}
    \caption{Generated motion for the three agents.}
    \label{fig:three_motions}
\end{figure}

\cref{fig:three_motions} shows an example of a motion in the three agents environments generated by a policy learned by \textbf{Our-T}. Agents succeeded to swap their positions while moving upward without collisions.

See \cref{appendix:length comparison} for comparisons of lengths of generated paths.

\section{Conclusion and Future Work}
In this paper, we proposed a framework, called midpoint trees, to generate geodesics by recursively predicting midpoints. We also proposed an actor-critic method for learning to predict midpoints, and we theoretically proved its soundness. Experimentally, we showed that our method can solve path planning tasks that existing reinforcement learning methods fail to solve.

While we assumed continuity of the policy function to prove our approach's theoretical soundness, the continuous midpoint property may only be satisfied locally.
Further research is needed on the conditions under which our method does not converge to wrong functions.
In addition, we were not able to discuss the conditions under which iterations converge in this paper, which is a topic for future work. 

Details of the learning algorithm in our proposed approach have not been fully investigated and may have room for improvement. For example, the followings could be considered.
\begin{itemize}
\item Our experiments showed that effective depth scheduling depends on the task. Exploration of an efficient depth scheduling algorithm that considers learning progress is a future challenge.
\item While we tried only a straightforward actor-critic learning algorithm in this paper,
algorithms for actor-critic reinforcement learning have been intensively researched.
Investigation of more efficient algorithms thus also remains for future work.
Especially,
while we used an on-policy algorithm, off-policy algorithms~\citep{degris2012off} may be useful in our framework.
\item While the architectures we used for both actors and critics were also simple,
the quasi-metric learning method~\citep{wang2022learning,wang2023optimal} may be useful for critics in our approach.
\end{itemize}

In our method, a policy must be learned for each environment.
By modifying our method so that the actor and critic input information on environments as in \citet{sartori2021cnn}, it may be possible to learn a policy that is applicable to different environments.

\section{Acknowledgements}
We thank the anonymous reviewers for their valuable comments and suggestions, which helped improve the quality of this paper.

\newpage

\appendix

\section{Proofs}\label{appendix:proof of lemmas}
We describe proofs omitted from the main text.

\subsection{Proof of Proposition~\ref{prop:uniqueness}}\label{appendix:proof of uniqueness}
We use the following lemma.

\begin{lemma}\label{prop:firstlemma}
Let $(X,d)$ be a pseudo-quasi-metric space.
Let $\pi_i:X\times 
 X \to X$ and $V_i:X\times X\to \mathbb{R}$ be two series of functions indexed by $i\in\mathbb{N}$ satisfying (\ref{eq:defpii}) and (\ref{eq:Vi}).
 Let $\pi:X\times 
 X \to X$ and $V:X\times X\to \mathbb{R}$ be two functions such that $\pi_i(x,y)\to \pi(x,y)$ and $V_{i}(x,y)\to V(x,y)$ when $i\to\infty$ for any $x,y\in X$.
 \begin{enumerate}
     \item\label{enum:1} Assume that the series of functions $(V_0, V_1,\ldots)$ is eventually equicontinuous. Then, for any $x,y\in X$,
\begin{equation}\label{eq:lemma13}
    \pi(x,y)\in\argmin_z \left(V(x,z)^2+V(z,y)^2\right),
\end{equation}
and
\begin{equation}\label{eq:lemma14}
V(x,y)=V(x,\pi(x,y))+V(\pi(x,y),y).
\end{equation}
\item\label{enum:2} Assume that $(X,d)$ is weakly symmetric and that $V_0(x,x)=0$, $V_0(x,y)\geq 0$, and $V_0(x,y)=0\implies d(x,y)=0$ for any $x,y\in X$. Then, for any $x\in X$,
\begin{equation}\label{eq:lemma15}
    d(x,\pi(x,x))=d(\pi(x,x),x)=0.
\end{equation}

\item\label{enum:3} For any $\varepsilon,\delta>0$, if for any $x,y\in X$,
\begin{equation}\label{eq:lemma16}
    V_0(x,y)<\delta\implies d(x,y)\leq(1+\varepsilon)V_0(x,y),
\end{equation}
then for any $x,y\in X$,
\begin{equation}\label{eq:lemma17}
    V(x,y)<\delta\implies d(x,y)\leq(1+\varepsilon)V(x,y),
\end{equation}
\item\label{enum:4} Assume that $(X,d)$ has the midpoint property. For any $\varepsilon,\delta>0$, if for any $x,y\in X$,
\begin{equation}\label{eq:lemma18}
    d(x,y)<\delta\implies V_0(x,y)\leq(1+\varepsilon)d(x,y),
\end{equation}
then for any $x,y\in X$,
\begin{equation}\label{eq:lemma19}
    d(x,y)<\delta\implies V(x,y)\leq(1+\varepsilon)d(x,y),
\end{equation}
\end{enumerate}

\end{lemma}
\begin{proof}
We first prove \ref{enum:1}. For any sequences $x_i$ and $y_i$ that converge $x$ and $y$, respectively,
\begin{equation}
    |V(x,y)-V_i(x_i,y_i)|\leq |V(x,y)-V_{i}(x,y)|
    +|V_{i}(x,y)-V_{i}(x_i,y_i)|.
\end{equation}
The first term can be bound by the convergence $V_i\to V$ at $(x,y)$ and the second term can be bound by the eventual equicontinuity of $(V_0, V_1, \ldots)$ at $(x,y)$ and convergence $x_i\to x$ and $y_i\to y$.
Thus, $\lim_{i\to\infty} V_i(x_i,y_i) = V(x,y)$. In particular,
\begin{equation}
    \lim_{i\to\infty}V_i(x,\pi_i(x,y))=V(x,\pi(x,y)),
\end{equation}
\begin{equation}
    \lim_{i\to\infty}V_i(\pi_i(x,y),y)=V(\pi(x,y),y).
\end{equation}

By (\ref{eq:defpii}), for any $x,y,z\in X$,
\begin{equation}V_i(x,\pi_i(x,y))^2+V_i(\pi_i(x,y),y)^2\leq V_i(x,z)^2+V_i(z,y)^2.
 \end{equation}
In the limit $i\to\infty,$
\begin{equation}V(x,\pi(x,y))^2+V(\pi(x,y),y)^2\leq V(x,z)^2+V(z,y)^2.
 \end{equation}
 Thus, (\ref{eq:lemma13}) follows. We can also show (\ref{eq:lemma14}) by taking the limit of both sides in (\ref{eq:Vi}).

Next, for \ref{enum:2},
when $V_i(x,x)=0$, because $V_i(x,\pi_i(x,x))=V_i(\pi_i(x,x),x)=0$ by (\ref{eq:defpii}), $V_{i+1}(x,x)=0$ by (\ref{eq:Vi}). Thus, $V_i(x,x)=V_i(x,\pi_i(x,x))=V_i(\pi_i(x,x),x)=0$ for all $i$ by induction.
As we can also show that $V_i(x,y)\geq 0$ for all $i$ by (\ref{eq:Vi}) and induction,
$V_{i+1}(x,y)=0$ implies $V_i(x,\pi_i(x,y))=V_i(\pi_i(x,y),y)=0$ by (\ref{eq:Vi}). Using this, $V_i(x,y)=0\implies d(x,y)=0$ for all $i$ is proven by induction. Therefore, $d(x,\pi_i(x,x))=d(\pi_i(x,x),x)=0$ for all $i$. Finally, by the assumption that $d$ is weakly symmetric, (\ref{eq:lemma15}) is proven.

As for \ref{enum:3}, because this assumption implies $V_0(x,y)\geq 0$, by (\ref{eq:Vi}) and induction, $V_i(x,y)\geq 0$ for all $i$.
Thus, $V_{i+1}(x,y)<\delta$ implies $V_i(x,\pi_i(x,y))< \delta$ and  $V_i(\pi_i(x,y),y)<\delta$ by (\ref{eq:Vi}).
Assume that $d(x,y)\leq(1+\varepsilon)V_i(x,y)$ for all $x,y\in X$ with $V_i(x,y)<\delta$. Then, when $V_{i+1}(x,y)<\delta$, because
\begin{equation}
d(x,\pi_i(x,y))\leq(1+\varepsilon)V_i(x,\pi_i(x,y))
\end{equation}
and 
\begin{equation}
d(\pi_i(x,y),y)\leq(1+\varepsilon)V_i(\pi_i(x,y),y),
\end{equation}
we can conclude that
\begin{equation}
d(x,y)\leq d(x,\pi_i(x,y))+d(\pi_i(x,y),y)
\leq (1+\varepsilon)V_{i+1}(x,y).    
\end{equation}
Therefore, $d(x,y)\leq(1+\varepsilon)V_i(x,y)$ when $V_i(x,y)<\delta$ for all $i$ by induction, which yields (\ref{eq:lemma17}).

Lastly, we prove \ref{enum:4}.
Assume that $d(x,y)<\delta\implies V_i(x,y)\leq(1+\varepsilon)d(x,y)$ for all $x,y\in X$.
Let $x,y\in X$ be a pair such that $d(x,y)<\delta$. By assumption, their midpoint $m$ exists. Then, by a similar calculation with (\ref{eq:vdcalc2}),
\begin{equation}\label{eq:vdcalc}
    \begin{split}
V_{i+1}(x,y)^2 &= \left(V_i(x,\pi_i(x,y))+V_i(\pi_i(x,y),y)\right)^2\\
&\leq 2V_i(x,\pi_i(x,y))^2+2V_i(\pi_i(x,y),y)^2\\
&\leq 2V_i(x,m)^2+2V_i(m,y)^2\\
&\leq 2(1+\varepsilon)^2\left(d(x,m)^2+d(m,y)^2\right)\\
&= (1+\varepsilon)^2d(x,y)^2,
    \end{split}
\end{equation}
where the first equality comes from (\ref{eq:Vi}), the second inequality comes from (\ref{eq:defpii}), and the third comes from the induction hypothesis and $d(x,m)=d(m,y)<\delta$. Thus, $d(x,y)<\delta\implies V_i(x,y)\leq(1+\varepsilon)d(x,y)$ for all $i$ by induction, which yields (\ref{eq:lemma19}).
\end{proof}

By the above lemma, it is sufficient to show the following to prove Proposition~\ref{prop:uniqueness}:
\begin{enumerate}
    \item $\pi$ is uniformly continuous.
    \item\label{enum:final_uniformity} For any $\varepsilon>0$, there exists $\delta>0$ such that the assumptions of \ref{enum:3} and \ref{enum:4} in Lemma~\ref{prop:firstlemma} are satisfied for $V_0=C$.
\end{enumerate}

To prove them, we use the following lemma, which is a generalization of a well-known fact to weakly symmetric pseudo-quasi-metric spaces.
\begin{lemma}\label{prop:thirdlemma}
Let $(X,d_X)$ be a compact pseudo-quasi-metric space and $(Y,d_Y)$ be a weakly symmetric pseudo-quasi-metric space.
If a function $f:X\rightarrow Y$ is continuous, then $f$ is uniformly continuous, that is, for any $\varepsilon>0$, there exists $\delta>0$ such that $d_Y\left(f(x),f(x')\right)<\varepsilon$ for any $x,x'\in X$ with $d_X(x,x')<\delta$.
\end{lemma}
\begin{proof}
We take $\varepsilon>0$ arbitrarily. Because $Y$ is weakly symmetric, for any $x\in X$,
we can take $\varepsilon(x)\leq\varepsilon/2$ such that for any $y\in Y$,
$d_Y(f(x),y)<\varepsilon(x)$ implies $d_Y(y,f(x))<\varepsilon/2$.
As $f$ is continuous, we can take $\delta(x)>0$ such that for any $x'\in X$,
$d_X(x,x')<2\delta(x)$ implies $d_Y(f(x),f(x'))<\varepsilon(x).$
Let $B(x):=B_{\delta(x)}(x)$. As $X$ is compact, we can take finite $x_1,\ldots,x_N$ such that $B(x_1),\ldots,B(x_N)$ covers $X$. Let $\delta:=\min_i \delta(x_i)$.

We take $x,x'\in X$ such that $d_X(x,x')<\delta$. We can then take $x_i$ such that $d_X(x_i,x)<\delta(x_i)$. As $d_Y(f(x_i),f(x))<\varepsilon(x_i)$ follows from this, $d_Y(f(x),f(x_i))<\varepsilon/2$. Because $d_X(x_i,x')<2\delta(x_i)$, $d_Y(f(x_i),f(x'))<\varepsilon(x_i)\leq\varepsilon/2$. Therefore, $d_Y(f(x),f(x'))<\varepsilon$.
\end{proof}

The uniform continuity of $\pi$ is a direct consequence of Lemma~\ref{prop:thirdlemma}.

Next,
we prove the existence of $\delta$ in the assumption of $\ref{enum:4}$ of Lemma~\ref{prop:firstlemma} by reducing it to uniform continuity of the ratio of $C$ to $d$.
We define a function $r:X\times X \rightarrow \mathbb{R}$ as
\begin{equation}
    r(x,y)=\begin{cases}
    \frac{C(x,y)}{d(x,y)} &d(x,y)\neq 0,\\
    1 &d(x,y)= 0.\\
    \end{cases}
\end{equation}
We show $r$ is continuous. We take $x,y\in X$ and series $x_0, x_1, \ldots\in X$ and $y_0, y_1, \ldots \in X$ that converge to $x$ and $y$, respectively. If $d(x,y)\neq 0$, because $d(x_i,y_i)\neq 0$ for sufficiently large $i$, $r(x_i,y_i)\to r(x,y)$ by the continuity of $C$ and $d$. Otherwise, because $x_i\to x$ and $y_i\to x$, $r(x_i,y_i)\to 1$ by Assumption~\ref{assump:C}. By Lemma~\ref{prop:thirdlemma}, $r$ is uniformly continuous, and the existence of $\delta$ in the assumption of $\ref{enum:4}$ of Lemma~\ref{prop:firstlemma} follows from this.
Note that $d(x,y)=0\iff C(x,y)=0$ from Assumption~\ref{assump:C}.

The existence of $\delta$ in the assumption of $\ref{enum:3}$ of Lemma~\ref{prop:firstlemma} follows from the uniform continuity of $r$ and the following lemma.

\begin{lemma}\label{prop:last lemma}
If $(X,d)$ is a compact, weakly symmetric pseudo-quasi-metric space and a continuous function $C:X\times X \rightarrow \mathbb{R}$ satisfies the first condition of Assumption~\ref{assump:C}, for any $\varepsilon>0$, there exists $\delta>0$ such that,  for any $x,y\in X$, $C(x,y)<\delta$ implies $d(x,y)<\varepsilon$.
\end{lemma}
\begin{proof}
Take arbitrary $\varepsilon>0$.
For any $x\in X$, we can take $\delta(x)>0$ such that for any $y\in X$, $C(x,y)<\delta(x)$ implies $d(x,y)<\varepsilon/2$. As $C$ is uniformly continuous by Lemma~\ref{prop:thirdlemma}, we can take $\eta(x)>0$ such that for any $y,z\in X$, $d(x,z)<\eta(x)$ implies $\left|C(x,y)-C(z,y)\right|<\delta(x)/2$. Because $X$ is weakly symmetric, after making $\eta(x)$ smaller if necessary, we can also assume that, for any $z\in X$, $d(x,z)<\eta(x)$ implies $d(z,x)<\varepsilon/2$. As $X$ is compact, we can take finite $x_1,\ldots,x_N$ such that for any $x\in X$, there exists $x_i$ such that $d(x_i,x)<\eta(x_i)$. Let $\delta:=\min_i \delta(x_i)/2$.

We prove that $C(x,y)<\delta$ implies $d(x,y)<\varepsilon$ for any $x,y\in X$. We can take $x_i$ such that $d(x_i,x)<\eta(x_i)$, which implies $d(x,x_i)<\varepsilon/2$ and $\left|C(x_i,y)-C(x,y)\right|<\delta(x_i)/2$. When $C(x,y)<\delta\leq \delta(x_i)/2$, because $C(x_i,y)<\delta(x_i)$, $d(x_i,y)<\varepsilon/2$. Thus, $d(x,y)<\varepsilon$.
\end{proof}

\subsection{Proof of Proposition~\ref{prop:C}}\label{appendix:proof_of_C}
To prove that the first condition of Assumption~\ref{assump:C} is satisfied, it is sufficient to show that $f(y_i)\to f(x)$ when $C(x,y_i)\to 0$, because the topology induced by $d$ coincides with the topology of the underlying manifold~\citep{bao2000introduction}. We can take the minimum value $c>0$ of $F\left(x,df_x^{-1}(\mathbf{v})\right)$ for $\mathbf{v} \in S^{d-1}:=\{\mathbf{v}\in\mathbb{R}^d|\|\mathbf{v}\|=1\}$ because $S^{d-1}$ is compact. Then, $C(x,y)\geq c\|f(y)-f(x)\|$ for any $y\in X$. Therefore, $f(y_i)\to f(x)$ when $C(x,y_i)\to 0$.

Next, we prove that the second condition of Assumption~\ref{assump:C} is satisfied. We fix $x \in X$ and $\varepsilon>0$. For a curve $\gamma:[0,1]\rightarrow M$, let
\begin{equation}
    L'(\gamma):=\int^1_0 F\left(x,df_x^{-1}\left(\frac{d\left(f\circ\gamma\right)}{dt}(t)\right)\right)dt,
\end{equation}
which is an approximation of $L(\gamma)$ and uses values of $F$ at $x$ instead of $\gamma(t)$.
Note that, although $\frac{d\left(f\circ\gamma\right)}{dt}(t)$ inherently belongs $T_{f(\gamma(t))}$, it can be naturally identified with $T_{f(x)}$ since there exists a natural basis in the tangent space of $\mathbb{R}^d$.
Let $E(\gamma)$ be the Euclidean length of $f\circ\gamma$.

We can take $\eta>0$ such that there exists $c'>0$ such that $L(\gamma)\geq c' E(\gamma)$ holds for any curve $\gamma$ contained in $B_{\eta}(x)$~\citep{busemann1941foundations}.
Let $\varepsilon':=\varepsilon c'$.

As $F$ is continuous and $S^{d-1}$ is compact, after making $\eta$ smaller if necessary, we can assume that, for any $y\in B_{\eta}(x)$ and $\mathbf{v}\in S^{d-1}$,
\begin{equation}
\left|F\left(y,df_y^{-1}(\mathbf{v})\right)-F\left(x,df_x^{-1}(\mathbf{v})\right)\right|\leq\varepsilon'.
\end{equation}
From this and the condition of $F$, for any $y\in B_{\eta}(x)$ and $\mathbf{v} \in \mathbb{R}^{d}$,
\begin{equation}
\left|F\left(y,df_y^{-1}(\mathbf{v})\right)-F\left(x,df_x^{-1}(\mathbf{v})\right)\right|\leq\varepsilon'\left\|\mathbf{v}\right\|.
\end{equation}
Then, for any curve $\gamma$ contained in $B_{\eta}(x)$,
\begin{equation}\label{eq:LL'}
\begin{split}
\left|L(\gamma)-L'(\gamma)\right|
&\leq\int_0^1\left|F\left(\gamma(t),\frac{d\gamma}{dt}(t)\right)-F\left(x,df_x^{-1}\left(\frac{d\left(f\circ\gamma\right)}{dt}(t)\right)\right)\right|dt\\
&\leq \varepsilon' \int_0^1 \left\|\frac{d\left(f\circ \gamma\right)}{dt}(t)\right\|dt\\
&= \varepsilon' E(\gamma).
\end{split}
\end{equation}

We can take $\delta$ such that for any $y,z\in B_{\delta}(x)$, there exists a minimizing geodesic from $y$ to $z$ contained in $B_{\eta}(x)$~\citep{bao2000introduction}, which is denoted by $\gamma(y,z)$.
Note that $d(y,z)=L(\gamma(y,z))$.
After making $\delta$ smaller if necessary, we can assume that $B_{\eta}(x)$ also contains the inverse image of the straight line segment from $f(y)$ to $f(z)$ by $f$, which is denoted by $l(y,z)$.
Note that $\|f(y)-f(z)\| = E(l(y,z))$.
Also note that and $C(y,z)=L'(l(y,z))$ because the derivative of $f\circ l(y,z)$ is the constant function to $f(z)-f(y)$.

As $L'$ is a Minkowskian metric, $L'(l(y,z))\leq L'(\gamma(y,z))$~\citep{busemann1941foundations}.
Therefore, for any $y,z\in B_{\delta}(x)$,
\begin{equation}
\begin{split}
d(y,z)-\varepsilon'\|f(y)-f(z)\|
&\leq L(l(y,z))-\varepsilon'\|f(y)-f(z)\|\\
&\leq C(y,z)\\
&\leq L'(\gamma(y,z))\\
&\leq d(y,z)+\varepsilon' E(\gamma(y,z)),
\end{split}
\end{equation}
where the first inequality comes from the fact that $\gamma(y,z)$ is the minimizing geodesic, the second comes from (\ref{eq:LL'}) with $\gamma = l(y,z)$, and the fourth comes from (\ref{eq:LL'}) with $\gamma = \gamma(y,z)$.
Because $\|f(y)-f(z)\|\leq E(\gamma(y,z))\leq d(y,z)/c'$, the condition is satisfied.

\section{Implementation Details}\label{appendix:ImpDetail}

\begin{table*}[ht]
\begin{center}
\begin{tabular}{llll}
& Ours & Seq & PG \\
\hline \\
Learning rate for \S~\ref{subsubsec:matusmoto}&$3\times10^{-5}$ & $3\times10^{-3}$& $5\times10^{-3}$\\
Learning rate for others &$10^{-6}$ & $3\times10^{-4}$& $5\times10^{-3}$\\
Batch size       &256&128&300\\
Number of epochs    &10 & 10 & 1 \\
Hidden layer sizes for \S~\ref{subsubsec:matusmoto} &[64, 64] &[64, 64] &[64, 64] \\
Hidden layer sizes for others &[400, 300, 300] &[400, 300, 300] &[400, 300, 300] \\
Activation function & ReLU & tanh & tanh \\
$\lambda$ for GAE  & - & 0.95 & - \\
Clipping parameter & - & 0.2 & 0.2 \\
Entropy coefficient & - & 0 & 1 \\
VF coefficient & - & 0.5 & - \\
Max gradient norm & - & 0.5 & - \\
Base standard derivation & - & - & 0.05 \\
Number of samples per episode & - & - & 10 \\
Number of episodes per cycle & - & - & 30 \\
\end{tabular}
\caption{Hyperparameter values.}
\label{tab:hyperparams}
\end{center}
\end{table*}

The codes used in the experiments for this paper are available at \url{https://github.com/omron-sinicx/midpoint_learning}.
\cref{tab:hyperparams} lists the hyperparameter values that we used for our method and the baseline methods.

The GPUs we used were NVIDIA RTX A5500 for experiments in \S~\ref{subsubsection:2Dobstacles} and \S~\ref{subsubsection:robot} and NVIDIA RTX A6000 for experiments in \S~\ref{subsubsec:matusmoto}, \S~\ref{subsubsec:carlike}, and \S~\ref{subsec:treeagent}.

\subsection{Environments}

The environments were implemented by NumPy and PyTorch~\citep{paszke2019pytorch}.
To implement of the robotic arm environment, we used PyTorch Kinematics~\citep{Zhong_PyTorch_Kinematics_2023}.
We also used Robotics Toolbox for Python~\citep{rtb} for visualization of the robotic arm motion.

For the coordinates in all environments except the angular component in the car-like environment (\S~\ref{subsubsec:carlike}), if a generated value is outside the valid range, it is clamped.
The angular component $S^1$ is represented by the unit circle $\{x^2+y^2=1\}$ in $\mathbb{R}^2$.
A point in $\mathbb{R}^2$ generated by a policy for this component is projected to $S^1$ by normalization after clamping to $[-1,1] \times [-1,1]$.
Note that, in this environment, the dimension of the space for state representation is different from the manifold dimension. 

\subsection{Our Method}\label{appendix:our method}

At Line~\ref{line:datacollectingend} in \cref{algorithm:Actor-Critic}, if the data size returned from one call to the data collection procedure is larger than the batch size, that procedure is called only once for one loop. Otherwise, it is called until one mini-batch is filled.
At Line~\ref{line:sampling}, two points are sampled from the uniform distribution over the free space.
We set the number of epochs $N_{\mathrm{epochs}}$ to $10$ and the batch size to $256$.

The actor network outputs a Gaussian distribution with a diagonal covariant matrix on the state representation space. During data collecting or training, a prediction by the actor is sampled from the distribution with the mean and deviation output by the network. During evaluating, the mean is returned as a prediction.
We use the reparameterization trick to train the actor as in SAC~\citep{haarnoja2018soft}.

Both the actor and critic networks
are multilayer perceptrons. The hidden layers were two of size $64$ for \S~\ref{subsubsec:matusmoto} and three of sizes $400, 300, 300$ for the other environments.
ReLU was selected as the activation function after trying tanh function, as well.
The output-layer size in the actor network is twice the dimension of the state representation space, where one half represents the mean and the other half represents logarithms of the standard deviations.
The critic network outputs a single value, whose exponential minus one is returned as a prediction of distance.
Adam~\citep{kingma2014adam} was used as the optimizer. The learning rate was tuned to $3\times 10^{-5}$ for \S~\ref{subsubsec:matusmoto} and to $10^{-6}$ for other environments.
PyTorch was used for implementation.

\subsection{Sequential Reinforcement Learning}\label{Appendix:ImpSeq}
In the conventional reinforcement learning environment, observations are pairs of current and goal states. Whenever an episode starts, start and goal points are sampled from the uniform distribution over the free space. The action space is $[-1,1]^d$, where $d$ is the manifold dimension. If the agent outputs $v$ for an observation $(f(p),f(g))$, the coordinate of the next state, $f(q)$, is calculated as
\begin{equation}\label{eq:move}
    f(q):= f(p)+\frac{\varepsilon}{F(p,f_p^{-1}(v))}v.
\end{equation}
If $f(q)$ is outside the coordinate space, it is clamped. Because $q$ cannot be calculated when exactly $v=0$, in such a case, $q$ is set to $p$ and the agent receives reward $R=-100$ as a penalty. Otherwise, the reward is calculated by (\ref{eq:reward}).
For the angular component in the car-like environment, the addition in (\ref{eq:move}) is calculated in $\mathbb{R}/2\pi\mathbb{Z}$.

We used PPO implemented in Stable Baselines3~\citep{raffin2019stable}, which uses PyTorch.
The discount factor was set to $1$.
The learning rate was tuned to $3\times10^{-3}$ for \S~\ref{subsubsec:matusmoto}, and to $3\times10^{-4}$ for the other environments. The batch size was tuned to $128$.
The network architectures were the same as those of the proposed method.
Other hyperparameters were set to the default values in the library.
The tanh function was selected as the activation function after trying ReLU, as well.

\subsection{Policy Gradient}\label{Appendix:ImpPG}
We modified the implementation of subgoal-tree policy gradient (SGT-PG)  by the authors, available at \url{https://github.com/tomjur/SGT-PG}, which uses TensorFlow~\citep{abadi2016tensorflow}.
The hyperparameter values except the hidden-layers sizes were the same as in the original paper.
We changed the network architectures to those of our proposed method.

The tanh function is used as the activation function.
The policy network outputs a Gaussian distribution with a diagonal covariant matrix on the state representation space.
The output-layers size is $2d$, where $d$ is the dimension of the state representation space. Let $m_1,\ldots, ,m_d,\sigma_1,\ldots,\sigma_d$ be the output for input $s,g$. The distribution mean is $(s+g)/2+(m_1,\ldots,m_d)^{\mathsf{T}}$, and the standard deviation for the $i$-th coordinate is $\mathrm{Softplus}(\sigma_i)+(0.05+\mathrm{Softplus}(c_i))\|s-g\|$, where $c_i$ is a learnable parameter.
While predictions are sampled from distributions during the policy training, we take the means as predictions during evaluation or training of other policies with higher indexes.

When we trained $\pi_D$, we sampled $30$ values of $(p_0,p_{2^D})$, the start and goal points, from the uniform distribution over the free space, in each training cycle.
For each sampled pair, we sampled $10$ values of $p_{2^{D-1}}$ from the distribution outputted by $\pi_D$, and we generate other waypoints deterministically by $\pi_{D-1},\ldots,\pi_1$ for each sampled value.
The average of $c_{\tau}$ was used as the baseline in (\ref{eq:PGtraining}).
The objective was that of PPO with an entropy coefficient of $1$ and clipping parameter of $0.2$. The optimizer was Adam, with the learning rate set to $5\times 10^{-3}$.

In the environments for \S~\ref{subsubsec:matusmoto}, we trained $\pi_1$ for $1000$ cycles and the other $\pi_D$ for $538$ cycles.
In the environment for \S~\ref{subsubsec:carlike} and \S~\ref{subsec:treeagent}, we trained each policy for $2117$ cycles.
In the environment for \S~\ref{subsubsection:2Dobstacles} and \S~\ref{subsubsection:robot}, we trained each policy for $1059$ cycles.
The total number of timesteps for \S~\ref{subsubsec:matusmoto} was $2\times 300\times 1000 + (4+8+16+32+64)\times 300\times 538=20613600\approx 2\times 10^7$, that for \S~\ref{subsubsec:carlike} and \S~\ref{subsec:treeagent} was $(2+4+8+16+32+64)\times 300\times 2117=80022600\approx 8\times 10^7$, and that for \S~\ref{subsubsection:2Dobstacles} and \S~\ref{subsubsection:robot} was $(2+4+8+16+32+64)\times 300\times 1059=40030200\approx 4\times 10^7$.

\section{Further Analysis of Experimental Results}\label{appendix:length comparison}
\begin{table}[ht]
    \centering
    \begin{subtable}[b]{0.99\textwidth}
    \centering
    \begin{tabular}{|c|ccccc|}
\hline
& Our-T& \textbf{Our-C}& Seq& Inter& 2:1\\
\hline
\textbf{Our-C}& $\mathbf{37 \pm 3\,(70)}$& & & & \\
Seq& $66 \pm 2\,(67)$& $\mathbf{73 \pm 3\,(70)}$& & & \\
Inter& $49 \pm 5\,(22)$& $\mathbf{57 \pm 7\,(22)}$& $62 \pm 8\,(20)$& & \\
2:1& $58 \pm 5\,(31)$& $\mathbf{70 \pm 4\,(32)}$& $63 \pm 5\,(29)$& $57 \pm 6\,(19)$& \\
Cut& $77 \pm 3\,(52)$& $\mathbf{85 \pm 2\,(53)}$& $69 \pm 5\,(49)$& $78 \pm 4\,(22)$& $74 \pm 4\,(30)$\\
\hline
\end{tabular}
    \caption{Matsumoto}
    \end{subtable}
    \begin{subtable}[b]{0.99\columnwidth}
    \centering
    \begin{tabular}{|c|cc|}
\hline
& Our-T& \textbf{Our-C}\\
\hline
\textbf{Our-C}& $\mathbf{18 \pm 3\,(77)}$& \\
2:1& $44 \pm 2\,(21)$& $\mathbf{73 \pm 3\,(21)}$\\
\hline
\end{tabular}
    \caption{Car-Like}
\end{subtable}
    \begin{subtable}[b]{0.99\columnwidth}
\vspace{10pt}
    \centering
    \begin{tabular}{|c|ccccc|}
\hline
& Our-T& Our-C& \textbf{Seq}& Inter& 2:1\\
\hline
Our-C& $73 \pm 10\,(39)$& & & & \\
\textbf{Seq}& $\mathbf{32 \pm 10\,(70)}$& $\mathbf{4 \pm 2\,(42)}$& & & \\
Inter& $42 \pm 15\,(16)$& $9 \pm 3\,(14)$& $\mathbf{77 \pm 5\,(16)}$& & \\
2:1& $52 \pm 15\,(30)$& $31 \pm 5\,(26)$& $\mathbf{84 \pm 5\,(30)}$& $70 \pm 8\,(15)$& \\
Cut& $69 \pm 10\,(36)$& $41 \pm 6\,(30)$& $\mathbf{89 \pm 2\,(38)}$& $80 \pm 6\,(16)$& $61 \pm 7\,(27)$\\
\hline
\end{tabular}
    \caption{2D Obstacles}
\end{subtable}
\begin{subtable}[b]{0.49\columnwidth}
    \centering
\begin{tabular}{|c|c|}
\hline
& \textbf{Our-T}\\
\hline
Our-C& $\mathbf{81 \pm 2\,(81)}$\\
\hline
\end{tabular}
\caption{Robotic Arm}
\end{subtable}
\begin{subtable}[b]{0.49\columnwidth}
    \centering
\begin{tabular}{|c|c|}
\hline
& Our-T\\
\hline
\textbf{Our-C}& $\mathbf{39 \pm 2\,(52)}$\\
\hline
\end{tabular}
\caption{Three Agents}
\end{subtable}
    \caption{Winning Rate Tables}
    \label{tab:winning_rates}
\end{table}

In addition to success rate,
we compared methods in the following way.
For each environment, we generated paths for all start and goal pairs in the evaluation data by the policies learned by all methods and, for all succeeded generations, calculated the path lengths, which are defined as the sum of $C$ values for consecutive waypoints.
For each pair of methods, in all start and goal pairs where both policies succeeded in generation, we calculated the percentage of pairs where the first policy generated a shorter path than the second policy.

\cref{tab:winning_rates} shows the percentages of instances where the methods in the top row outperform the methods in the left column. The values were averaged over random seeds and their standard errors are also displayed.
The numbers in parentheses represent the percentages of instances where both methods succeeded.
Methods with low success rates are omitted.
The methods which outperformed all other methods and their results are highlighted in bold.

While \textbf{Seq} had the highest success rate in the Matsumoto environment, \textbf{Our-T} and \textbf{Our-C} outperformed it in this comparison.
This may be because our methods generates denser waypoints and therefore smoother paths.
In contract, in the 2D obstacle environment, \textbf{Seq} outperformed \textbf{Our-T}, despite the small difference between their success rates.
This is probably because, in this environment, straight lines are the shortest where there are no obstacles.
While the success rates for \textbf{Our-T} are slightly higher than those for \textbf{Our-C} in both the robotic arm and three agents environments, \textbf{Our-T} outperformed \textbf{Our-C} in the former and vice versa in the latter.

It is interesting that \textbf{2:1} loses to our proposed method in this comparison, even though the non-uniformity of waypoints does not directly affect length.
A biased ratio in waypoints may make it difficult to generate smooth paths, or have a negative effect on learning.

\bibliography{ref}
\bibliographystyle{plainnat}

\end{document}